\title{\vspace{-1.0cm}FormulaOne: Measuring the Depth of Algorithmic Reasoning\ \  Beyond Competitive Programming}
\author{
Gal Beniamini \and
Yuval Dor \and
Alon Vinnikov \and
Shir Granot Peled \and
Or Weinstein \and
Or Sharir \and
Noam Wies \and
Tomer Nussbaum \and
Ido Ben Shaul \and
Tomer Zekharya \and
Yoav Levine \and
Shai Shalev-Shwartz \and
Amnon Shashua
}
\date{\textbf{AAI}}
\begin{document}

\maketitle

%Configuring amsthm
\newtheorem{theorem}{Theorem}[section]
\newtheorem*{theorem*}{Theorem}
\newtheorem{corollary}{Corollary}[theorem]
\newtheorem{lemma}[theorem]{Lemma}
\newtheorem{proposition}[theorem]{Proposition}
\newtheorem{claim}[theorem]{Claim}
\newtheorem{definition}[theorem]{Definition}
\newtheorem{notation}[theorem]{Notation}
\newtheorem*{openq*}{Question}
\newtheorem{exmp}[theorem]{Example}

\newcommand{\Sn}{\mathbb{S}_n}
\newcommand{\NN}{\mathbb{N}}
\newcommand{\CC}{\mathbb{C}}
\newcommand{\RR}{\mathbb{R}}
\newcommand{\QQ}{\mathbb{Q}}
\newcommand{\ZZ}{\mathbb{Z}}

\newcommand{\eqdef}{\mathrel{\mathop:}=}

% TODO notes.
\definecolor{lblue}{HTML}{BFD7EA}
\definecolor{otherblue}{HTML}{4e94ba}
\definecolor{terracota}{HTML}{e35336}
\definecolor{beige}{HTML}{DCC5B2}
\definecolor{nicegreen}{HTML}{DDEB9D}
\definecolor{nicered}{HTML}{D84040}
\definecolor{nicepurple}{HTML}{BA487F}
\definecolor{darkgreen}{HTML}{1F7D53}
\definecolor{brightmaroon}{rgb}{0.76, 0.13, 0.28}
\definecolor{tan}{HTML}{EAC8A6}
\definecolor{forbiddenblue}{HTML}{2980B9}
\definecolor{attritiongreen}{HTML}{1ABC9C}
\definecolor{extremalpurple}{HTML}{8E44AD}
\definecolor{ideblue}{HTML}{3F7898}
\definecolor{ideorange}{HTML}{D58747}
\definecolor{idegreen}{HTML}{609B55}
\definecolor{lightred}{HTML}{FFCCCB}
\definecolor{lightgreen}{HTML}{CCFEFF}
\definecolor{dblue}{HTML}{42A5F5}    
\definecolor{lorange}{HTML}{FFF3E0}  
\definecolor{dorange}{HTML}{FFB74D}  
\definecolor{arrowgray}{HTML}{546E7A}
\definecolor{dgreen}{HTML}{66BB6A}   
\definecolor{lgreen}{HTML}{E8F5E9}   
\definecolor{lblue}{RGB}{204, 229, 255}
\definecolor{plotblue}{HTML}{51A6DF}
\definecolor{plotred}{HTML}{E96658}
\definecolor{plotgreen}{HTML}{4CD385}
\definecolor{plotyellow}{HTML}{F4AA36}
\colorlet{colorLight}{lblue!78}
\colorlet{colorBase}{lblue}

\newcommand{\todogal}[1]{\todo[color=lblue]{[Gal]: #1}}
\newcommand{\itodogal}[1]{\todo[color=lblue,inline]{[Gal]: #1}}
\newcommand{\todoyuval}[1]{\todo[color=purple]{[Yuval]: #1}}
\newcommand{\itodoyuval}[1]{\todo[color=purple,inline]{[Yuval]: #1}}
\newcommand{\todoalon}[1]{\todo[color=green]{[Alon]: #1}}
\newcommand{\itodoalon}[1]{\todo[color=green,inline]{[Alon]: #1}}
\newcommand{\todoyoav}[1]{\todo[color=beige]{[Yoav]: #1}}
\newcommand{\itodoyoav}[1]{\todo[color=beige,inline]{[Yoav]: #1}}
\newcommand{\todonadav}[1]{\todo[color=brown]{[Nadav]: #1}}
\newcommand{\itodonadav}[1]{\todo[color=brown,inline]{[Nadav]: #1}}

% Problem block.
\newcommand{\problemblock}[3]{
    \begin{tikzpicture}
        \node[draw, thick, rounded corners=3pt, minimum width=2.5cm, minimum height=2.5cm, label=center:#2, color=gray] (graphA) {};
        \node[draw, thick, rounded corners=3pt, minimum width=2.5cm, minimum height=2.5cm, label=center:#3, right=0.3cm of graphA, color=gray] (graphB) {};
        \path (graphA.north) -- (graphB.north) coordinate[pos=0.5] (midpoint);
        \node[font=\bfseries, above=0.2cm of midpoint] (title) {#1};
        \node[below=0.15cm of graphA, text=green!50!black] {\normalsize\checkmark};
        \node[below=0.15cm of graphB, text=red!70!black] {\normalsize\sffamily X};
    \end{tikzpicture}
}

% Useful TikZ definitions.

\tikzset{
    thoughtbox/.style={
        rectangle,
        rounded corners=8pt,
        draw=dblue,
        fill=lblue,
        thick,
        drop shadow={opacity=0.2, shadow xshift=2pt, shadow yshift=-2pt},
        text width=6.5cm, 
        align=center,
        inner sep=6pt,
        font=\sffamily\small\linespread{0.9}\selectfont,
    },
    % Style for the highlighted observation boxes
    observationbox/.style={
        thoughtbox, % Inherits all properties from thoughtbox
        fill=lorange,
        draw=dorange,
    },
        % Style for the highlighted io boxes
    iobox/.style={
        thoughtbox, % Inherits all properties from thoughtbox
        fill=lgreen,
        draw=dgreen,
    },
    % Style for the connecting arrows
    arrowstyle/.style={
        -{Latex[length=2.5mm, width=2mm]}, % Nicer arrow tip
        thick,
        draw=arrowgray
    }
}

\begin{abstract}

Frontier AI models demonstrate formidable breadth of knowledge. But how close are they to true human --- or superhuman --- expertise? Genuine experts can tackle the hardest problems and push the boundaries of scientific understanding.
To illuminate the limits of frontier model capabilities, we turn away from contrived competitive programming puzzles,
and instead focus on real-life research problems. 

We construct FormulaOne, a benchmark that lies at the intersection of graph theory, logic, and algorithms, all well
within the training distribution of frontier models.
Our problems are incredibly demanding, requiring an array of reasoning steps, involving topological and geometric insight,
mathematical knowledge, combinatorial considerations, precise implementation, and more.
The dataset has three key properties. First, it is of commercial interest and relates to practical large-scale optimisation
problems, such as those arising in routing, scheduling, and network design. Second, it is generated from the highly expressive framework of Monadic Second-Order (MSO) logic on graphs, paving the way toward automatic problem generation at scale ---  ideal for building RL environments. Third, many of our problems are intimately related to the frontier of 
theoretical computer science, and to central conjectures therein, such as the Strong Exponential Time Hypothesis (SETH). As such, 
any significant algorithmic progress on our dataset, beyond known results, could carry profound theoretical implications.

Remarkably, state-of-the-art models like OpenAI’s o3 fail entirely on FormulaOne, solving less than 1\% of the questions, even when given 10 attempts and explanatory fewshot examples
--- highlighting how far they remain from expert-level understanding in some domains. To support further research, we additionally curate FormulaOne-Warmup, offering a set of simpler tasks, from the same distribution. We release the full corpus along with a comprehensive evaluation framework.
\end{abstract}
\begin{figure}[H]  
\centering
\begin{center}
\includegraphics[width=0.74\linewidth]{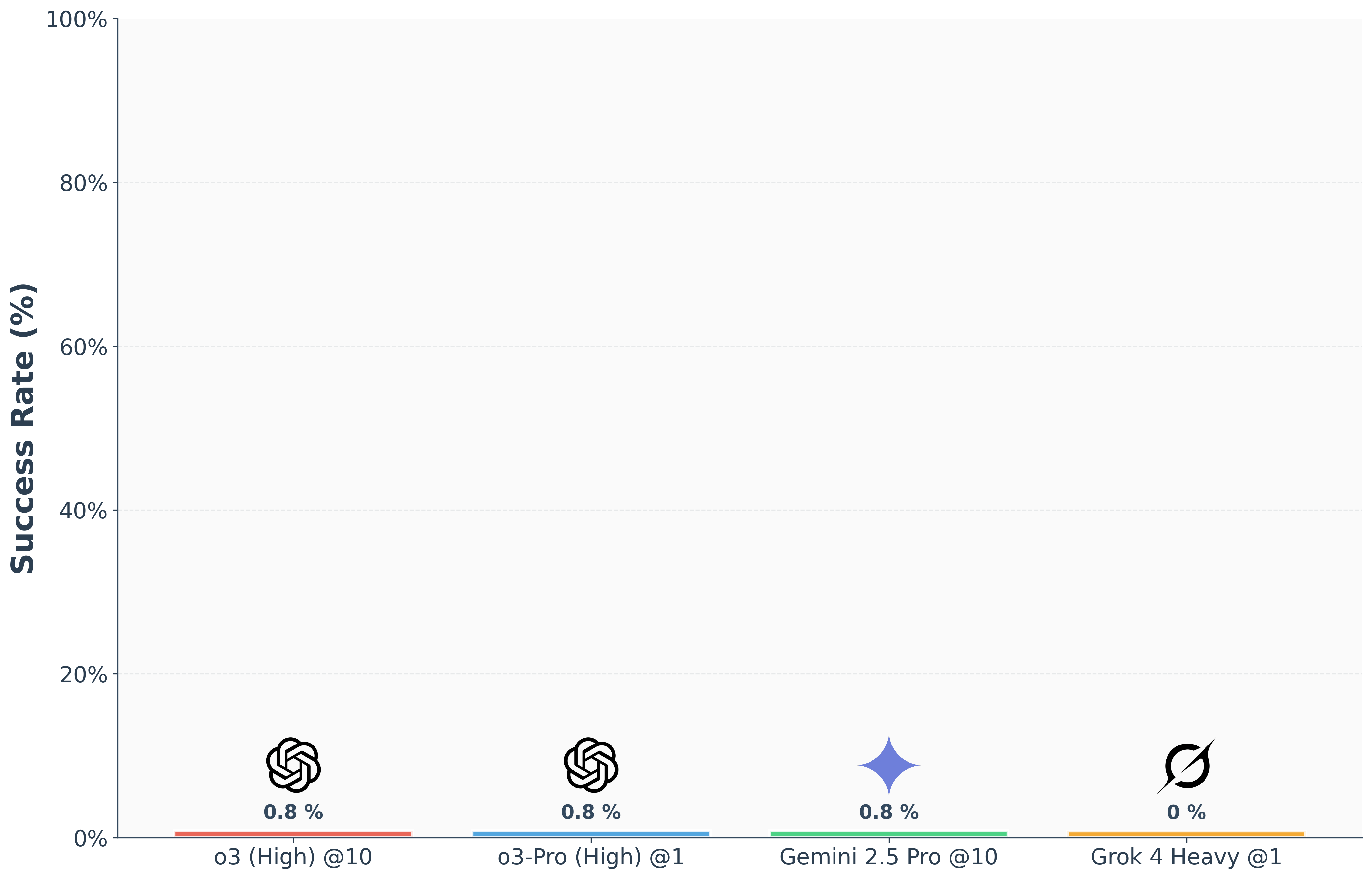}
\caption{Performance of frontier reasoning models on the FormulaOne dataset.}
\label{fig:Abstract}
\end{center}
\end{figure}

\section{Introduction}
\label{sec:introduction}

Artificial Intelligence (AI) holds the promise of solving the world's hardest scientific, algorithmic, and mathematical challenges---problems so complex they baffle even the brightest human minds. Current benchmarks, however, often do not paint a complete picture of AI's depth of understanding. While recent achievements are remarkable, such as OpenAI-o3 attaining a 2,724 rating on CodeForces or securing a gold medal at the International Olympiad in Informatics~\cite{el2025competitive}, they nevertheless mask a sobering reality: the skills honed for these competitions do not capture the full spectrum of reasoning needed for large-scale, real-world research problems. Tasks such as optimising global supply chains, managing large-scale power grids, and designing resilient network infrastructures are orders of magnitude harder, requiring algorithmic insight that goes far beyond the scope of typical competitive programming.

To this end, we introduce \textbf{FormulaOne}, a benchmark centred around dynamic programming over graphs---an algorithmic cornerstone of real-world optimisation. Our framework is constructed in a principled, semi-mechanistic manner based on Monadic Second-Order (MSO) logic, a formal logic on graphs. Its theoretical foundation is an algorithmic meta-theorem due to Courcelle \cite{courcelle1990monadic}, which guarantees that a vast class of problems defined using this formal logic can be solved efficiently for graphs that have a ‘tree-like’ structure. This allows us to generate a large and conceptually diverse corpus of mathematically deep problems, each guaranteed to have an efficient solution, yet potentially being extraordinarily challenging to discover in practice.

The problems in FormulaOne are designed to serve as a ``new ARC'' (c.f., \cite{chollet2019measure, chollet2024arc, chollet2025arc}) for mathematical reasoning, demanding a synthesis of skills from topological and geometric insight, to knowledge of graph theory, and the need for precise implementation.
While ARC is designed to measure fluid intelligence by evaluating performance on tasks that are explicitly out-of-distribution (OOD) relative to the training examples, FormulaOne presents a challenge that is,
by design, entirely in-distribution. Every problem, from the simplest to the most complex, is generated from the same family: MSO logic on graphs. Thus our dataset consists entirely of algorithmic coding problems, a task on which frontier reasoning models should,
by rights, perform well. 
Nevertheless, even the best frontier reasoning models, which excel at human-level competitions
such as OpenAI's o3, completely fail on our dataset, achieving a stark $<$1\% success rate.

One of the primary characteristics of the problems appearing in FormulaOne, is the amount of reasoning required.
In \Cref{sec:appendix_mcg} we exemplify this by proving a solution, in full, to one such problem.
Astonishingly, there are no fewer than $15$ interdependent, highly complex mathematical reasoning steps, all intertwined in non-obvious ways.
We conjecture that this reasoning depth, typical of cutting edge real world research problems, 
is the main characteristic due to which frontier AI models ``flat-line" on FormulaOne.
This grim result highlights a pressing need for deeper reasoning environments and better benchmarks,
capturing increasing levels of complexity, and perhaps necessitating a more structured approach.
To support further research, we will release the full dataset along with a comprehensive evaluation framework.
We believe this provides a solid foundation to guide and measure future progress in advanced algorithmic reasoning.

Another key characteristic of FormulaOne is its profound connections to the frontier of theoretical computer science and central conjectures therein.
A prime example is the Strong Exponential Time Hypothesis (SETH), a foundational conjecture in fine-grained complexity.
Informally, SETH posits that the classic brute-force search algorithm for the Boolean Satisfiability (SAT) problem is
essentially optimal, meaning no algorithm can provide a significant exponential speedup.
The time complexity of many core graph problems, including several in our dataset, is believed to be optimal under SETH. 
That is, no algorithm can solve them faster than a particular known lower bound, parameterised 
by the tree-likeness of the input graph.
Therefore, if a powerful AI agent were able to discover a genuinely novel, faster algorithm for one of these hard tasks, it
would do more than just solve a puzzle; it would effectively refute a central hypothesis in theoretical computer science.

The semi-mechanistic nature of our problem generation provides the first steps towards an essentially unbounded source of high-depth algorithmic challenges, ideal for building next-generation environments for Reinforcement Learning with Verifiable Rewards (RLVR).
Existing RLVR benchmarks are often limited in one of two ways: they either feature problems with low conceptual depth,
such as school-level mathematics \cite{cobbe2021training, hendrycks2021measuring}, or very simple programming tasks \cite{austin2021program}, or the datasets are more challenging but static and limited in size~\cite{imajuku2025ale}. 
Our framework of problems derived from MSO logic addresses both points:
it can provide a virtually infinite stream of problems with profound mathematical depth,
for which the solutions are nevertheless automatically verifiable.
This combination of unbounded difficulty and guaranteed correctness is critical for training
agents to tackle genuinely open-ended scientific discovery.

\subsection{Related Work}

Recent years have seen a proliferation of benchmarks at the intersection of mathematics, coding, and reasoning. We group our discussion of this prior work into four main areas.

\paragraph{Algorithmic Coding Benchmarks.}
Several benchmarks have been developed to measure the algorithmic problem-solving abilities of AI systems. ALE-Bench \cite{imajuku2025ale} introduced over 40 hard optimisation tasks from AtCoder contests, focusing on iterative solution refinement. Others, like CodeElo \cite{quan2025codeelo} and LiveCodeBench \cite{jain2024livecodebench}, provide rigorous evaluation on live contest problems, assigning models human-comparable Elo ratings in authentic execution environments. However, top models are rapidly approaching the performance ceiling on these platforms; OpenAI's o3 model, for instance, has achieved an Elo rating above 2700 on Codeforces \cite{quan2025codeelo}, indicating this domain is becoming saturated. In contrast, FormulaOne provides a challenge that is harder to saturate, as its difficulty stems not from eclectic puzzle design but from the profound reasoning depth characteristic of substantive research problems.

\paragraph{Out-of-Distribution Reasoning.}
The Abstraction and Reasoning Corpus (ARC) \cite{chollet2019measure, chollet2024arc} assesses generalisation to explicitly out-of-distribution (OOD) tasks. It presents visual puzzles that require an agent to infer a transformation from a handful of pixel-grid examples with no linguistic hints. Our work takes a complementary direction, probing deep \emph{interpolation} within a domain --- algorithmic programming --- the `bread and butter' of modern reasoning models' training. The failure of these models on our benchmark is therefore particularly notable: it demonstrates that even within a familiar domain, they cannot yet perform the multi-step reasoning that is required in order to successfully tackle such problems.

\paragraph{Frontier Knowledge Benchmarks.}
Other benchmarks test the limits of broad, specialised knowledge. Humanity's Last Exam (HLE) \cite{phan2024hle} assembles thousands of graduate-level questions from dozens of academic disciplines, while FrontierMath \cite{glazer2024frontiermath} collects challenging research-level problems in mathematics. These benchmarks consist of fixed, static problem sets, whereas our approach allows for semi-automatic generation of problems, in an unbounded domain (MSO). Furthermore, our family of problems is incredibly well-suited to the creation of powerful RLVR environments, with a variety of granular reward signals available, which prior datasets lack.

\paragraph{AI Pushing the Frontier of Theory.}
Very recently, there has been a push for AI-assisted discovery of new algorithms. Systems like AlphaTensor \cite{fawzi2022alphatensor} and AlphaEvolve \cite{novikov2024alphaevolve} have used deep reinforcement learning to find faster algorithms for specific cases of matrix multiplication, while AlphaDev \cite{real2023alphadev} found marginal improvements for sorting routines. These projects are typically structured as closed-loop discovery processes focused on a handful of human-selected challenges. Critically, their discoveries, while impressive, are conceptually distant from improving theoretical bounds, for instance, making progress on the exponent of matrix multiplication, $\omega$. Our work contributes to this area by providing an open-ended suite of problems where discovering a faster-than-previously-known algorithm could have genuine theoretical consequences. Since many of our challenges are related to conjectures like the Strong Exponential Time Hypothesis (SETH), a truly novel solution would be of huge theoretical significance.
\pagebreak
\section{Dataset}
\label{sec:dataset}

\tikzstyle{vertex}=[circle, inner sep=2pt, minimum size=5pt]
\tikzstyle{filled_vertex}=[vertex, fill=otherblue, draw=black]
\tikzstyle{unfilled_vertex}=[vertex, draw=black, thick]

% --- Pair 1: Independent Set vs. Not Independent Set (on a 5-cycle graph) ---
\newcommand{\GraphIS}{%
    \begin{tikzpicture}[scale=0.8]
        % Vertices
        \node[filled_vertex]   (h1) at (-0.5, -0.5) {}; % Horn 1
        \node[unfilled_vertex] (v1) at (0,0) {};
        \node[unfilled_vertex] (v2) at (1.5,0) {};
        \node[filled_vertex]   (h2) at (2.0, -0.5) {}; % Horn 2
        \node[filled_vertex]   (v3) at (0.75, 1.2) {};  % Tip of triangle
        % Edges
        \draw (h1)--(v1)--(v2)--(h2);
        \draw (v1)--(v3)--(v2);
    \end{tikzpicture}%
}

% RHS: Not an independent set. The two filled vertices are connected by an edge.
\newcommand{\GraphNotIS}{%
    \begin{tikzpicture}[scale=0.8]
        % Vertices
        \node[unfilled_vertex] (h1) at (-0.5, -0.5) {}; % Horn 1
        \node[filled_vertex]   (v1) at (0,0) {};
        \node[unfilled_vertex] (v2) at (1.5,0) {};
        \node[filled_vertex] (h2) at (2.0, -0.5) {}; % Horn 2
        \node[filled_vertex]   (v3) at (0.75, 1.2) {};  % Tip of triangle
        % Edges
        \draw (h1)--(v1)--(v2)--(h2);
        \draw (v1)--(v3)--(v2);
    \end{tikzpicture}%
}

% --- Pair 2: Sun Graph vs. Modified Graph ---
% LHS: Marks the 5-cycle and the 5 outer leaves.
\newcommand{\GraphSun}{%
    \begin{tikzpicture}[scale=0.9]
        \node[filled_vertex] (c1) at (90:0.7) {};
        \node[filled_vertex] (c2) at (162:0.7) {};
        \node[filled_vertex] (c3) at (234:0.7) {};
        \node[filled_vertex] (c4) at (306:0.7) {};
        \node[filled_vertex] (c5) at (18:0.7) {};
        \node[filled_vertex] (l1) at (90:1.3) {};
        \node[filled_vertex] (l2) at (162:1.3) {};
        \node[filled_vertex] (l3) at (234:1.3) {};
        \node[filled_vertex] (l4) at (306:1.3) {};
        \node[filled_vertex] (l5) at (18:1.3) {};
        \node[unfilled_vertex] (cv) at (0,0) {};
        % Draw edges
        \draw (c1)--(c2)--(c3)--(c4)--(c5)--(c1);
        \draw (c1)--(l1) (c2)--(l2) (c3)--(l3) (c4)--(l4) (c5)--(l5);
        \draw (cv)--(c1) (cv)--(c3);
    \end{tikzpicture}%
}

% RHS: Marks the 5-cycle, the central vertex, and 4 of the 5 leaves.
\newcommand{\GraphNotSun}{%
    \begin{tikzpicture}[scale=0.9]
        \node[filled_vertex] (c1) at (90:0.7) {};   
        \node[filled_vertex] (c2) at (162:0.7) {};
        \node[filled_vertex] (c3) at (234:0.7) {};
        \node[filled_vertex] (c4) at (306:0.7) {};
        \node[filled_vertex] (c5) at (18:0.7) {};
        \node[filled_vertex] (l1) at (90:1.3) {};   
        \node[filled_vertex] (l2) at (162:1.3) {};
        \node[filled_vertex] (l3) at (234:1.3) {};
        \node[filled_vertex] (l4) at (306:1.3) {};
        \node[unfilled_vertex] (l5) at (18:1.3) {};
        \node[filled_vertex] (cv) at (0,0) {};
        % Draw edges
        \draw (c1)--(c2)--(c3)--(c4)--(c5)--(c1);
        \draw (c1)--(l1) (c2)--(l2) (c3)--(l3) (c4)--(l4) (c5)--(l5);
        \draw (cv)--(c1) (cv)--(c3);
    \end{tikzpicture}%
}

% --- Pair 3: Dominating Set (Final 6-Vertex Compact Layout) ---
% LHS: A valid dominating set. The two filled vertices cover all others.
\newcommand{\GraphDS}{%
    \begin{tikzpicture}[scale=0.9]
        \node[filled_vertex] (v1) at (0, 1) {};
        \node[unfilled_vertex] (v2) at (-0.9, 0.4) {};
        \node[unfilled_vertex]  (v3) at (0.9, 0.4) {};
        \node[unfilled_vertex]  (v4) at (-0.6, -1) {};
        \node[unfilled_vertex]  (v5) at (0.6, -1) {};
        \node[filled_vertex] (v6) at (0, -0.3) {};
        
        % Edges
        \draw (v1)--(v2) (v1)--(v3);
        \draw (v2)--(v6);
        \draw (v3)--(v6);
        \draw (v4)--(v6) (v5)--(v6);
    \end{tikzpicture}%
}

% RHS: Not a dominating set. The filled vertices fail to dominate.
\newcommand{\GraphNotDS}{%
    \begin{tikzpicture}[scale=0.9]
        
        \node[unfilled_vertex]  (v1) at (0, 1) {};
        \node[filled_vertex] (v2) at (-0.9, 0.4) {};
        \node[unfilled_vertex]  (v3) at (0.9, 0.4) {};
        \node[unfilled_vertex]  (v4) at (-0.6, -1) {};
        \node[filled_vertex]  (v5) at (0.6, -1) {};
        \node[unfilled_vertex] (v6) at (0, -0.3) {};
        
        % Edges
        \draw (v1)--(v2) (v1)--(v3);
        \draw (v2)--(v6);
        \draw (v3)--(v6);
        \draw (v4)--(v6) (v5)--(v6);
    \end{tikzpicture}%
}

\begin{figure}[t]
    % \centering
    \hspace{-1.3cm}
    \begin{tikzpicture}[scale=0.85] 
        \matrix[
            matrix of nodes,
            nodes={anchor=center},
            column sep=5pt,
            row sep=5pt 
        ] (m) {
            % A single row with three blocks
            \problemblock{Independent Set}{\GraphIS}{\GraphNotIS} & 
            \problemblock{Sun Graph}{\GraphSun}{\GraphNotSun} &
            \problemblock{Dominating Set}{\GraphDS}{\GraphNotDS} \\
        };
        
        % Define midpoint between the first and second blocks
        \path (m-1-1.east) -- (m-1-2.west) coordinate[pos=0.5] (vmid1);
        
        % Define midpoint between the second and third blocks
        \path (m-1-2.east) -- (m-1-3.west) coordinate[pos=0.5] (vmid2);

        % Draw the two vertical lines
        \draw (vmid1 |- m.north) -- (vmid1 |- m.south);
        \draw (vmid2 |- m.north) -- (vmid2 |- m.south);

        \node[fit=(m), draw, thick, rounded corners=15pt, inner sep=0pt, color=gray] (frame) {};
    \end{tikzpicture}%
    \caption{Three problems, expressible by MSO logic, solvable by dynamic programming on tree-like graphs.
    In each, the left-hand-side showcases a subgraph exhibiting a given property,
    whereas the right-hand-side showcases one that does not. Filled vertices,
    marked blue, denote the subset over which the graph is induced.}
    \label{fig:three-examples}
\end{figure}
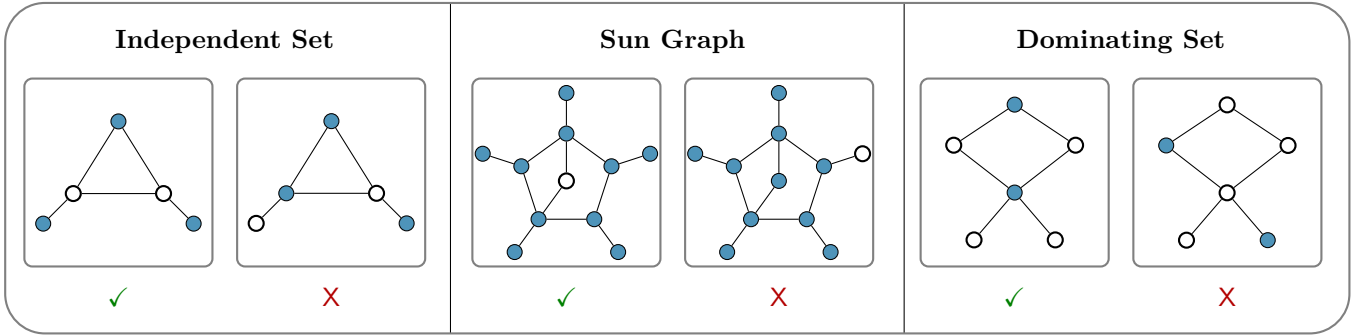

We introduce \textbf{FormulaOne}, a dataset consisting of a wide range of dynamic programming problems on graphs.
At the core of our work is the semi-automatic generation of a large corpus of
mathematically deep algorithmic problems, designed to gauge the command of abstract problem-solving,
multi-step combinatorial reasoning, and practical implementation.
Its theoretical foundation is rooted in an \textit{algorithmic meta-theorem}, due to Courcelle \cite{courcelle1990monadic}, which broadly states:
\begin{quoting}[indentfirst=true]
``For every \textit{sufficiently tree-like graph}, any problem definable in an expressive formal logic~--- Monadic Second-Order (MSO) logic --- can be solved by a dynamic programming algorithm that operates in time \textit{linear} in the order of the graph.''
\end{quoting}

Our problems all belong to a single underlying distribution: properties expressible through MSO logic.
The number of properties definable in Monadic Second Order Logic is, in principle, unlimited. 
Indeed, this family is incredibly diverse; covering both well-known classical problems,
such as $3$-colourability or counting maximal independent sets,
and entirely novel problems, including local invariants and topological structure (see \Cref{fig:three-examples}). 
Concretely, we generate problems of the following form.

\begin{tcolorbox}[
    arc=3mm,      
    colback=white,
    colframe=black,
    boxrule=1pt   
]
\vspace{0.1cm}\noindent\textbf{\underline{$\mathtt{Problem\ \#44}$}}\vspace{0.2cm}

\textbf{Input:} A tree-like graph $G = (V,E)$, a tree decomposition $\mathcal{T}$ of $G$, and a weight function $w: V \to \mathbb{N}$.\vspace{0.1cm}

\textbf{Objective:} Compute the sum of all weights of sets $S \subseteq V$ such that: \vspace{0.1cm}

\quad\quad\quad\quad\quad The graph $G[S]$, induced\footnote{The induced subgraph \( G[S] \) consists 
of all vertices in \( S \) and all edges of \( G \) with both endpoints in \( S \).} over $S$, does not
contain any cycle of length four.\vspace{0.15cm}

\textit{Notation: The weight of a set of vertices \( S \) is defined as \( w(S) \eqdef \sum_{v \in S} w(v) \).}\par

\end{tcolorbox}

In this context, a tree decomposition $\mathcal{T}$ is an auxiliary structure which enables for an efficient
dynamic programming algorithm to operate over the input graph. A detailed exposition is deferred to \Cref{sec:mso_is_hard}.

\subsection{Guiding Principles}

At the heart of FormulaOne lie three core principles.

\begin{enumerate}

\item \textbf{A ``New ARC'' for Mathematical Reasoning.} Our dataset consists of algorithmic coding problems requiring deep
mathematical reasoning.
Indeed, several of our problems directly relate to research papers at the forefront of
complexity theory \cite{kenig2024enumeration, fomin2012faster}.
The domain of algorithmic coding has become the core benchmark for measuring frontier reasoning models' progress.
For example, recently OpenAI's o3 reportedly ranked 175th on CodeForces \cite{el2025competitive}, relative to all human competitors.
However, while a human expert of that calibre should, by rights, be able to score highly on our problems,
we find that frontier models do not --- achieving a $<$1\% success rate on our hard problems (see \Cref{sec:results}).

\item \textbf{An Unbounded, Mathematically Deep Algorithmic Environment for RLVR.} The dynamic programming problems
generated through the use of Monadic Second-Order Logic provide an essentially unbounded source of algorithmic problems,
of incredible substance (see \Cref{sec:mso_is_hard}).
We believe our work provides the first steps towards enabling a truly high-depth environment for Reinforcement Learning with Verifiable Rewards (RLVR), finally moving research beyond existing datasets.

\item \textbf{Pushing the Boundaries of Complexity Theory.}
The complexity of problems expressible through MSO logic on tree-like graphs is intimately related to the Strong Exponential Time Hypothesis (c.f. \cite{lokshtanov2011known, cygan2015lower}), a central conjecture in the field of fine-grained 
complexity \cite{calabro2009complexity}.
Concretely, the best-known time complexity of a large portion of our dataset
is, in fact, optimal under SETH. Therefore, any \textit{significant} algorithmic progress on our dataset
could carry profound theoretical implications.
\end{enumerate}

\subsection{The dataset}
\label{subsect:dataset}

We introduce two datasets: a core dataset for benchmarking AI performance, and an auxiliary dataset for  research and evaluation.
\begin{enumerate}
\item \textbf{FormulaOne.} We introduce FormulaOne, a dataset of 120 challenging dynamic programming problems
that evaluate creativity, sophistication, and expert-level reasoning.

\item \textbf{FormulaOne-Warmup.}
To facilitate research and evaluation in this demanding setting, we also provide FormulaOne-Warmup, an auxiliary dataset containing 100 simpler problems.
\end{enumerate}

\noindent In \Cref{sec:results}, we provide an extensive evaluation of all top frontier reasoning models on our datasets.

\paragraph{Problem Formulation.}

Our problems are defined using MSO formulas.
An MSO formula is a statement in a formal logic used to express properties of graphs.
In this logic, one can define conditions on graphs, including by quantification over vertices, edges, and sets of vertices and edges
(`second-order'). Each problem in our dataset is defined by a \textit{unary open formula} in this logic --- a statement with a
single free variable that acts as a placeholder for an input set, which is either a set of vertices, or a set of edges. To help make matters concrete, let us consider a problem description in full.
\vspace{0.2cm}
\newtcolorbox{problemenv}{
    breakable, 
    colback=black!5, 
    colframe=black,
    boxrule=0.5pt, 
    arc=2mm,
    top=3mm,
    bottom=3mm,
    left=4mm,
    right=4mm
}

\begin{filecontents*}{problem.txt}
## Description

Task Type: Weighted Model Count (CSP-wMC).
Explanation:
    You are given a set of elements, where each element has a weight. Your goal is to compute the sum of weights of all
    the subsets that satisfy the constraints given below, modulo 10^9 + 7.
    If there is no feasible set, the result of this task is -1.
Constraints Definition:
    In this question, the elements are the vertices of a graph, and the constraints are defined in graph-theoretic terms.
    A description of the graph family, the constraints, and the vertex weights is given below.
    The constraints:
        A set of vertices such that the graph induced over these vertices contains no induced square (C4). An induced
        C4 consists of 4 vertices that form a cycle of length 4, with no additional edges between them. 
    Input Graph:
        A graph whose tree-width is at most 3, and in whose 'nice tree decomposition' the JOIN nodes have width at most
        2 (i.e., every JOIN node contains at most 3 vertices).
        The following variables are used to represent graphs in this family:
            - `graph`: The graph.
            - `n`: The number of vertices in the graph.
            - `tree_decomposition`: The tree decomposition of the graph.
    Vertex Weights:
        Every vertex is assigned an integer weight. Each weight is between 1 and 10^5, inclusive. The list of weights
        is given in the variable `x_s`.

## Input

The input is composed of the following items, separated by newlines.

- n:
    An integer written on a single line.

- x_s:
    The first line contains a single integer representing the length of the list.
    The second line contains a list of space-separated integers.

- graph:
    An undirected graph over the vertex set of non-negative integers (starting at zero).
    The first line contains a single integer -- the number of vertices in the graph.
    The second line contains a single integer -- the number of edges in the graph.
    Every edge of the graph is then printed on a single line, as two space-separated integers.

- tree_decomposition:
    A tree decomposition of a graph.
    The first line contains a single integer -- the number of bags in the tree decomposition.
    The second line contains a single integer -- the number of edges in the tree decomposition.
    Then, for every bag in the decomposition, there is a single line of space-separated integers, representing the
    vertices in the bag (the i-th line corresponds to the i-th bag).
    Finally, every edge joining two bags of the tree decomposition is printed on a single line, as two space-separated
    integers, in ascending order. Note that the bags of the tree decomposition are indexed by non-negative integers,
    starting at zero.

## Output

A single integer: the sum of weights of all subsets that satisfy the constraints, modulo 10^9 + 7, or -1 if no such
subset exists.

## Constraints

The following constraints must all be satisfied:

n >= 4
n <= 94

## Time Limit

100.00 seconds per test.
\end{filecontents*}

\begin{problemenv}
\scriptsize\verbatiminput{problem.txt}
\end{problemenv}

\vspace{0.1cm}
As shown above, an input instance to a problem consists of a graph, a tree-decomposition\footnote{See section \ref{DP_on_tree_like_graphs}} of said graph,
and a weight function. If the formula's variable is a vertex set, weights are defined over the vertices;
otherwise, they are defined over the edges. 
The core objective for each problem in the dataset we release is Weighted Model Counting (WMC) ---
the goal is to compute the total weight of all subsets that satisfy the 
given formula, where the weight of a set $S$ is the sum of all weights of elements in $S$,
according to the given weight function.
Since this number may be large\footnote{Indeed, the truth set may be exponentially large in the order of the graph.},
we require only its remainder modulo the safe prime $10^9 + 7$.

\subsection{The Tip of the Iceberg -- Towards Scalable Problem Generation}
\label{subsect:tip_of_the_iceberg}
The current dataset, while comprehensive, represents just a \textit{fraction} of what is expressible
and solvable within the framework of MSO-based dynamic programming over graphs.
Let us mention, in passing, why.

\begin{itemize}
\item \textbf{Logical Expressiveness:} The vast majority of our formulas can be formulated in a subset of MSO logic, known as
$\text{MSO}_1$. This leaves out well-known broader classes, such as
$\text{MSO}_2$ and myriad other extensions of MSO, for which results similar to Courcelle's \cite{courcelle1990monadic} hold.

\item \textbf{Objective Variety:} We are releasing problems with WMC objectives only. However, the same logical framework
can be applied to optimisation problems (e.g., finding the minimum or maximum weight of a satisfying set) and other counting variants, which we do not include in this release. Such problems are qualitatively different and can be solved with state designs that are meaningfully different from their WMC counterparts (for instance, they do not require a `unique witness').

\item \textbf{Multi-Pass Traversal:} In our evaluation we allow for a single post-order traversal over the tree decomposition.
However, it is known (e.g., \cite{fichte2021dynasp2}) that allowing multiple passes or other traversal strategies may yield
substantial performance gains. Such approaches can simplify the required DP state and lead to alternative state representation and improvement to the overall running time.

\item \textbf{Base Graphs and Tree Decompositions:} Currently, we provide both a graph and a corresponding tree decomposition. This can be extended in two ways: by varying the families of input graphs, and by requiring the model to generate the tree decomposition itself.\footnote{In preliminary experiments on a small selection of problems, we found that prompting models to perform both the tree decomposition and the subsequent dynamic programming resulted in complete failure (0\% success rate).} We remark that while there is a linear-time algorithm for finding tree decompositions of tree-like graphs \cite{bodlaender1993linear}, the algorithm is not practical. Instead, for any \textit{particular} family of graphs (such as cactii, or $k$-trees), one can craft an \textit{efficient and practical} linear-time algorithm. Furthermore, there are non-trivial interactions between the structure of the input graph and the MSO formula. For example, if a problem canonically requires tracking information about odd-length cycles, this part of the DP state becomes redundant when the ambient graph is known to be bipartite. Recognising such properties allows for highly non-obvious state compression,
which is \textit{crucial} for efficient solutions.
\item \textbf{Beyond Treewidth:} This release is predicated on tree decompositions and the \textit{treewidth} parameter.
However, the underlying principle of Courcelle's theorem extends to many other graph parameters, such as \textit{pathwidth} \cite{robertson1983graph}, and \textit{clique-width} \cite{courcelle1993handle, wanke1994k}, and in certain cases, \textit{tree-depth} \cite{bodlaender1998rankings}.
\end{itemize}

\section{Complexity and Nuance in MSO-Based Dynamic Programming}
\label{sec:mso_is_hard}

\tikzstyle{vertex}=[circle, inner sep=2pt, minimum size=5pt]
\tikzstyle{filled_vertex}=[vertex, fill=otherblue, draw=black]
\tikzstyle{unfilled_vertex}=[vertex, draw=black, thick]
\tikzstyle{bag}=[draw=blue!80, thick, dashed, inner sep=5pt, rounded corners=5pt, draw opacity=0.7]
\tikzstyle{introduced_vertex}=[vertex, fill=nicegreen, draw=black]
\tikzstyle{forgotten_vertex}=[vertex, fill=nicered, draw=black]
\tikzstyle{join_vertex}=[vertex, fill=otherblue, draw=black, thick] 
\tikzstyle{comp_left}=[vertex, fill=orange, draw=black, thick] 
\tikzstyle{comp_right}=[vertex, fill=otherblue, draw=black, thick] 
\tikzset{
  half and half/.style n args={2}{
    shape=circle,
    draw,
    thick,
    inner sep=0pt,
    minimum size=6pt,
    path picture={
      \fill[#1] (path picture bounding box.south west) -- (path picture bounding box.north east) -- (path picture bounding box.north west) -- cycle;
      \fill[#2] (path picture bounding box.south west) -- (path picture bounding box.north east) -- (path picture bounding box.south east) -- cycle;
    }
  }
}

To properly contextualise the challenges and subtleties that arise in tackling the problems within our dataset,
we begin with a high-level ``crash-course'' overview of the methodology for dynamic programming on
tree decompositions. These topics are presented briefly, intuitively and mostly, visually. We then quickly proceed
to give a guided tour of the complexities inherent in our problems.

We remark that, despite their innocuous appearance, our MSO-based problems are complex and multifaceted. 
They demand combinatorial reasoning, geometric insights arising from the structure of the tree
decomposition, exact logic, and a careful implementation. Moreover, often these problems are riddled with 
\textit{subtle pitfalls}, all of which must be carefully sidestepped when implementing the dynamic programming
solution. The intent of this section is to showcase these intricacies. 

\subsection{Dynamic Programming on Tree-Like Graphs}\label{DP_on_tree_like_graphs}

\paragraph{Treewidth.} The methodology for solving these otherwise intractable problems relies heavily on the
``tree-likeness'' of the given graph, formally known as \textit{treewidth}
(c.f. \cite{bertele1973non, halin1976s, robertson1984graph, bodlaender1988dynamic}).
Loosely speaking, a graph of low treewidth locally resembles a tree. 
Any tree-like graph can be decomposed into a collection of small sets of vertices, ``bags'',
which are themselves organised into a tree structure. This construction, known as a \textit{tree decomposition},
provides a roadmap that allows an algorithm to process the entire graph by traversing this tree of bags.

Each bag acts as a small, \textit{local} ``window'' into the graph.
For the traversal process to be sound, every vertex is handled across a contiguous region
of the decomposition before it is ultimately processed and removed from view.
The tree decomposition is also carefully constructed to ensure that every vertex and edge of the graph is eventually processed.
With each step, the view changes only slightly and predictably: a single vertex might be introduced into the window,
forgotten from it, or two previously separate parts of the graph are joined (merged) within the view
(see \Cref{fig:bag-operations}).
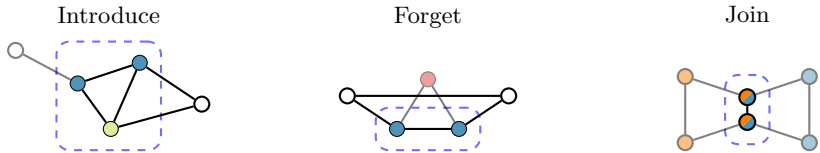
\begin{figure}[H]
\centering

\begin{subfigure}{0.25\textwidth}
    \centering
    \small Introduce \\ \vspace{2mm} 
    \begin{tikzpicture}[scale=0.55]
        % Vertices
        \node[filled_vertex] (v1) at (0,0) {};
        \node[filled_vertex] (v2) at (1.5,0.5) {};
        \node[introduced_vertex] (v3) at (0.8, -1.1) {}; 
        \node[unfilled_vertex, opacity=0.5] (v4) at (-1.5, 0.8) {};
        \node[unfilled_vertex] (v5) at (3, -0.5) {};
        % Bag
        \begin{pgfonlayer}{background}
            \node[bag, fit=(v1) (v2) (v3)] {};
        \end{pgfonlayer}
        % Edges
        \draw[thick] (v1) -- (v2);
        \draw[thick] (v2) -- (v3);
        \draw[thick] (v1) -- (v3);
        \draw[thick, opacity=0.5] (v1) -- (v4);
        \draw[thick] (v2) -- (v5);
        \draw[thick] (v3) -- (v5);

    \end{tikzpicture}
\end{subfigure}
\begin{subfigure}{0.25\textwidth}
    \centering
    \small Forget \\ \vspace{5.45mm} 
    \begin{tikzpicture}[scale=0.55]
        % Vertices
        \node[filled_vertex] (f1) at (0,0) {};
        \node[filled_vertex] (f2) at (1.5,0) {};
        \node[forgotten_vertex, opacity=0.5] (f3) at (0.75, 1.2) {}; % The "forgotten" vertex
        \node[unfilled_vertex] (f4) at (-1.2, 0.8) {};
        \node[unfilled_vertex] (f5) at (2.7, 0.8) {};
        % Bag (encloses what remains)
        \begin{pgfonlayer}{background}
            \node[bag, fit=(f1) (f2)] {};
        \end{pgfonlayer}
        % Edges
        \draw[thick] (f1) -- (f2);
        \draw[thick, opacity=0.5] (f1) -- (f3); % Edge to forgotten vertex
        \draw[thick, opacity=0.5] (f2) -- (f3); % Edge to forgotten vertex
        \draw[thick] (f1) -- (f4);
        \draw[thick] (f2) -- (f5);
        \draw[thick] (f4) -- (f5);
    \end{tikzpicture}
\end{subfigure}
\begin{subfigure}{0.25\textwidth}
    \centering
    \small Join \\ \vspace{5.72mm} 
    \begin{tikzpicture}[scale=0.55]
        % Left component
        \node[comp_left, opacity=0.5] (l1) at (-1.5, 0.8) {};
        \node[comp_left, opacity=0.5] (l2) at (-1.5, -0.8) {};
        % Right component
        \node[comp_right, opacity=0.5] (r1) at (1.5, 0.8) {};
        \node[comp_right, opacity=0.5] (r2) at (1.5, -0.8) {};
        % Separator vertices 
        \node[half and half={orange}{otherblue}] (s1) at (0, 0.3) {};
        \node[half and half={orange}{otherblue}] (s2) at (0, -0.3) {};
        % Bag
        \begin{pgfonlayer}{background}
            \node[bag, fit=(s1)(s2)] {};
        \end{pgfonlayer}
        % Edges
        \draw[thick, opacity=0.5] (l1) -- (s1) -- (r1);
        \draw[thick, opacity=0.5] (l2) -- (s2) -- (r2);
        \draw[thick, opacity=0.5] (l1) -- (l2);
        \draw[thick, opacity=0.5] (r1) -- (r2);
        \draw[thick] (s1) -- (s2);
    \end{tikzpicture}
\end{subfigure}

\caption{Local modifications to bags (dashed boxes). On the left, a vertex (green) is added (``introduced''), in the centre, a vertex (red) is removed (``forgotten''), and on the right, two previously processed graphs (orange, blue) meet at a bag (``joined''). Already forgotten vertices are semi-transparent. Vertices in the current bag, and those yet to have been processed, are opaque.}
\label{fig:bag-operations}
\end{figure}

The \emph{treewidth} of a graph simply corresponds to the size of the largest window needed for this process.
A low treewidth guarantees that our perspective is always limited to a small number of vertices at any given time,
making complex problems tractable.

\paragraph{Dynamic Programming.}
The algorithm that solves problems on graphs of low treewidth is a form of dynamic programming, 
wherein we traverse the tree decomposition of a given graph in post-order, considering a single bag at a time.
In this context, a partial solution refers to a solution to the problem at hand, restricted only to the portion of
the graph processed so far, i.e., the subgraph induced by the vertices appearing in the bags that are descendants of the 
current bag. Often, partial solutions are grouped within a bag according to their \textit{profile}, which acts as a summary
for the way in which they interact with the bag (see \Cref{fig:dp-snapshot}).

The key lies in what information we store for each bag --- the ``state''. 
The art of state design is in crafting a good profile that summarises the
properties of a \textit{partial solution}, as viewed from the vantage point of a bag. 
Such a state must be rich enough so as to allow it to be updated as we transition from one bag to the next --- whether introducing or removing a vertex,
or merging two views --- yet concise enough to be computationally tractable.
To design such a state, one must answer:
``What information do we need to know about the past and present, to make valid choices for the future?''.

% ------------------------------------------------------------------
%  DP traversal and bag states figure
% ------------------------------------------------------------------
\usetikzlibrary{positioning,fit,backgrounds,calc,matrix,arrows.meta}
\newcommand{\stateIN}[1]
{{\scriptsize\textcolor{darkgreen!90!black}{#1}}}
\newcommand{\stateOUT}[1]
{{\scriptsize\textcolor{nicered!90!black}{#1}}}
\tikzset{graph_vertex/.style={circle,draw=black,fill=white,inner sep=0pt,minimum size=15pt}}

\newcommand{\TDTraversal}{%
  \begin{tikzpicture}[scale=0.6, baseline=(bag_node.center), remember picture]

    % -------- Earlier vertices (semi-transparent) ------------------
    \node[graph_vertex,   opacity=0.5] (a) at (-1,  0.0) {\small $a$};
    \node[graph_vertex, opacity=0.5] (b) at (1,  0.0) {\small $b$};
    \node[graph_vertex, opacity=0.5] (c) at (0, 1.0) {\small $c$};

    % -------- Current bag vertices --------------------------------
    \node[graph_vertex] (d) at ( 0,  2.4) {\small $d$};
    \node[graph_vertex] (f) at ( 1.5, 2.0) {\small $f$};
    \node[graph_vertex]   (e) at ( 1.5,  3.4) {\small $e$};

    % -------- Later vertices (dashed) ------------------------------
    \node[graph_vertex] (g) at ( 2.3,  4.9) {\small $g$};
    \node[graph_vertex] (h) at ( 4.0,  4.9) {\small $h$};
    \node[graph_vertex] (i) at ( 3.3, 6.3) {\small $i$};

    % -------- Bag frame ----------------
    \begin{pgfonlayer}{background}
        \node[bag, fit=(d) (e) (f)] (bag_node) {};
    \end{pgfonlayer}
    \node[anchor=north, font=\small] at (bag_node.north) [yshift=15pt] {Bag  $\mathcal{B}$};

    % -------- Graph edges -----------------------------------------
    \draw[thick, opacity=0.5] (a) -- (c) (b) -- (c);
    \draw[thick, opacity=0.5] (c) -- (d) (c) -- (f);
    \draw[thick] (d) -- (e) (e) -- (f);
    \draw[thick] (e) -- (g) (g) -- (h) (g) -- (i);

    % ----------- Helper coords on bag frame -----------------------
    \coordinate (bagTop) at ($(bag_node.north east)+(-0.05, -0.05)$);
    \coordinate (bagBot) at ($(bag_node.south east)+(-0.05,  0.05)$);

    % ----------- State box ----------------------------------------
    \node[draw, rounded corners, thick,
          inner sep=6pt, font=\scriptsize,
          align=center, anchor=west] (statebox)
          at ($(bag_node.east)+(3.6, 0)$) %  
          {%
            \renewcommand{\arraystretch}{1.1}%            
            \scalebox{0.9}{%            
            \begin{tabular}{@{} l | c @{}}
            \quad\quad\ \textbf{Profile} & \textbf{Value} \\\hline
            $S_1 \cap \mathcal{B} = \{d, e, f\} $ & \dots \\
            $S_2 \cap \mathcal{B} = \{d, f\} $ & \dots \\
            $S_3 \cap \mathcal{B} = \{e, f\} $ & \dots \\
            \end{tabular}           
            }
          };

    \node[anchor=north, font=\small] at (statebox.north) [yshift=15pt] {DP State Table};

    % ----------- Connectors from bag frame to table ---------------
    \draw (bagTop)[opacity=0.7, dotted]  -- ($(statebox.north west)+(0,  0)$);
    \draw (bagBot)[opacity=0.7, dotted]  -- ($(statebox.south west)+(0, 0)$);

    % 2. Circle style
    \tikzset{bagcircle/.style={
        circle, draw=black, thick,
        minimum size=0.6cm,
        inner sep=0pt,
        font=\scriptsize,
        text width=1.7em,
        align=center
    }}
    
    % 3. One-column matrix (styling 'def' inline)
    \matrix (bagseq) [
        matrix of nodes,
        nodes={bagcircle}, % Apply base style to all nodes
        row sep=8pt,
        anchor=west
    ] at ($(statebox.east)+(3.0,0)$)
    {
      $efg$ \\
      $ef$ \\
      |[bagcircle, draw=blue!80, dashed]| $def$ \\ % <-- 
      |[opacity=0.5]| $df$ \\
      |[opacity=0.5]| $cdf$ \\
    };
    
    % 4. Up-pointing arrows 
    \draw[thick,opacity=0.5,-{Latex[length=4pt,width=4pt]}] (bagseq-5-1.north) -- (bagseq-4-1.south);
    \draw[thick,opacity=0.5,-{Latex[length=4pt,width=4pt]}] (bagseq-4-1.north) -- (bagseq-3-1.south);
    \draw[thick,-{Latex[length=4pt,width=4pt]}] (bagseq-3-1.north) -- (bagseq-2-1.south);
    \draw[thick,-{Latex[length=4pt,width=4pt]}] (bagseq-2-1.north) -- (bagseq-1-1.south);
    
    \node[font=\bfseries\scriptsize] at ($(bagseq-1-1.north)+(0, 0.5)$) {\vdots};
    \node[font=\bfseries\scriptsize, opacity=0.5] at ($(bagseq-5-1.south)+(0, -0.3)$) {\vdots};

  \end{tikzpicture}%
}
\begin{figure}[H]
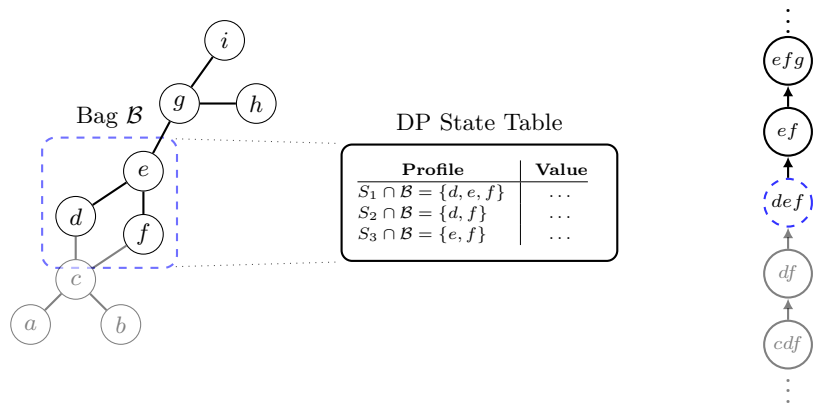

  \centering
  \TDTraversal
  \caption{Snapshot of a dynamic programming algorithm, traversing over a tree decomposition.
  The current bag is encircled in blue. 
    \textbf{Left:} The input graph $G$. Previously processed vertices and edges drawn semi-transparent.
    The current bag acts as a local window onto the vertices $d$, $e$ and $f$. \textbf{Centre:} The DP table associated with the current bag lists all possible \textit{profiles} — distinct ways in which \textit{partial solutions} (here, $S_1$, $S_2$ and $S_3$) can interact with the bag. Each row represents a grouping of partial solutions by their profile, and the corresponding value stores an aggregate measure over that group (e.g., maximum weight or total count).
    \textbf{Right:} The tree decomposition of $G$. The traversal proceeds bottom-up, aggregating information from subtrees until reaching the root, where a solution over the entire graph $G$ is extracted. In this case, the tree decomposition shown is linear (a path), that is, it contains no join bags (where subtrees merge).}
  \label{fig:dp-snapshot}
\end{figure}

\subsection{A very easy problem}

Before we jump into the deep end, let us dip our toes into the water through a very easy, classical problem.

\begin{tcolorbox}[
    arc=3mm,      
    colback=white,
    colframe=black,
    boxrule=1pt   
]
\vspace{0.1cm}\noindent\textbf{\underline{$\mathtt{Dominating\ Set}$}}\footnote{Hereafter, we omit the `Input' specification, as it is identical to the entry listed under \Cref{sec:dataset}.}\vspace{0.2cm}

\textbf{Objective:} Compute the sum of all weights of sets $S \subseteq V$ such that: \vspace{0.1cm}

\quad\quad\quad\quad\quad Every vertex outside the set is adjacent, in $G$, to at least one vertex in the set.\vspace{0.15cm}

\end{tcolorbox}

To tackle the dominating set problem we consider the different ways in which partial solutions, i.e., dominating
sets within the subgraph traversed so far, may interact with the vertices in the bag. The following DP state turns
out to be sufficient: for every vertex $v \in \mathcal{B}$ in the current bag $\mathcal{B}$, store one of the following
three configurations,

\begin{itemize}
    \item $\textcolor{otherblue!90!black}{\textsc{in}}$: The vertex is in the emerging dominating set.
    \item $\textcolor{darkgreen}{\textsc{out-dominated}}$: The vertex is \textit{not} in the emerging dominating set, but a previously seen vertex that is in the set, dominates it.
    \item $\textcolor{nicered}{\textsc{out-needs-domination}}$: The vertex is \textit{not} in the emerging dominating set, and is as of yet not dominated by any vertex chosen to participate in the set.
\end{itemize}

We stress that not every configuration of an emerging set may lead to a valid dominating set.
For instance, if a vertex is forgotten from the bag while in the status of $\textcolor{nicered}{\textsc{out-needs-domination}}$, 
then the emerging set corresponding to that state is \textit{not} dominating, and must be invalidated.
Notice that the aforementioned state aims to store the \textit{minimum amount of information} necessary in order to identify
the way in which the emerging dominating set interacts with the bag, and allows us to maintain the state as we traverse
the graph. For instance, we do not store irrelevant details, such as the identity of the vertices' dominators. 
We provide a full solution to the dominating set problem, written in Python, \href{https://github.com/double-ai/formulaone-dataset-release}{here}.

% DP states figure
% LHS graph: 5 vertices (a–e) 
\newcommand{\DominatingSetBag}{ %
    \begin{tikzpicture}[scale=0.9, baseline=(bag_node_dom_bag_view.center), remember picture]

        % Vertices ------------------------------
        \node[filled_vertex]   (a) at (-0.5,  0.0) {};  % a (filled)
        \node[unfilled_vertex] (b) at ( 0.0,  0.5) {};  % b
        \node[unfilled_vertex] (c) at (0.0, -0.5) {};  % c
        \node[unfilled_vertex] (d) at ( 0.5,  0.) {};  % d
        \node[filled_vertex, opacity=0.5]   (e) at ( 1.5, -0.4) {};  % e (filled, outside)

        % Bag (encloses a–d) ----------------------------------------
        \begin{pgfonlayer}{background}
            \node[bag, fit=(a) (b) (c) (d)] (bag_node_dom_bag_view) {};
        \end{pgfonlayer}

        % Edges --------------------------------
        \draw[thick] (a) -- (b);
        \draw[thick, opacity=0.5] (d) -- (e);

        % Label inside the bag
        \node[anchor=north, font=\small] at (bag_node_dom_bag_view.north)
            [yshift=15pt] {Bag};
    \end{tikzpicture}%
}

% Dominating-set DP state.
\newcommand{\DominatingSetState}{%
  \begin{tikzpicture}[scale=0.9, baseline=(bag_node_dom_dp_state.center), remember picture]

    % ---------------- vertices -----------------------------------
    % a : IN
    \node[filled_vertex]
          (a) at (-0.5,  0.0) {};

    % b : OUT_DOM
    \node[unfilled_vertex, draw=darkgreen, thick, fill=darkgreen, fill opacity=0.05]
          (b) at ( 0.0,  0.5) {};

    % c : OUT_NEEDS_DOM
    \node[unfilled_vertex, draw=nicered, thick, fill=nicered, fill opacity=0.05]
          (c) at (0.0, -0.5)  {};

    % d : OUT_DOM
    \node[unfilled_vertex, draw=darkgreen, fill=darkgreen, fill opacity=0.05]
          (d) at ( 0.5,  0.) {};

    % ---------------- enclosing bag ------------------------------
    \begin{pgfonlayer}{background}
      \node[bag, fit=(a)(b)(c)(d)] (bag_node_dom_dp_state) {};
    \end{pgfonlayer}

    % Label inside the bag
    \node[anchor=north, font=\small] at (bag_node_dom_dp_state.north)
        [yshift=15pt] {DP State};
        
    % --- Manual State Labels ---
    
    % "IN" 
    \draw[thick] (a.195) -- ++(195:4mm) coordinate (a_corner) -- ++(180:5mm) coordinate (a_end);
    \node[font=\scriptsize, anchor=south, color=otherblue!70!black] at ($(a_corner)!0.5!(a_end)$) {IN};

    % "OUT-DOM" 
    \draw[thick] (b.15) -- ++(15:9mm) coordinate (b_corner) -- ++(0:1.7cm) coordinate (b_end);
    \node[font=\scriptsize, anchor=south, color=darkgreen!90!black] at ($(b_corner)!0.5!(b_end)$) {OUT-DOM};
    
    % "OUT-NEEDS-DOM" 
    \draw[thick] (c.188) -- ++(188:10mm) coordinate (c_corner) -- ++(180:2.7cm) coordinate (c_end);
    \node[font=\scriptsize, anchor=south, color=nicered!90!black] at ($(c_corner)!0.5!(c_end)$) {OUT-NEEDS-DOM};

    % "OUT-DOM" 
    \draw[thick] (d.310) -- ++(310:6mm) coordinate (d_corner) -- ++(0:1.7cm) coordinate (d_end);
    \node[font=\scriptsize, anchor=south, color=darkgreen!90!black] at ($(d_corner)!0.5!(d_end)$) {OUT-DOM};
    
  \end{tikzpicture}%
}

\begin{figure}[htbp]
    \centering

    % ---- bag style ----
    \tikzstyle{bag}=[draw=blue!80, thick, dashed,
                     rounded corners=5pt, inner sep=5pt, draw opacity=0.7]

    \begin{tikzpicture}[remember picture, scale=0.9]
        % -------------------------------- TOP ROW --------------------------------
        \node (bag1)   at (0, 0)     {\DominatingSetBag};
        \node (state1) at (5.0cm, 0) {\DominatingSetState};

        \draw[overlay, gray, dashed, opacity=0.5] (bag_node_dom_bag_view.north east) -- (bag_node_dom_dp_state.north west);
        \draw[overlay, gray, dashed, opacity=0.5] (bag_node_dom_bag_view.south east) -- (bag_node_dom_dp_state.south west);

    \end{tikzpicture}

    \caption{Partial solutions as viewed through a bag, and the corresponding DP state. Vertices in the dominating set \( S \) are filled blue. \emph{Semi-transparent} vertices have already been ``forgotten''. The top vertex is dominated by a neighboring vertex in the set and currently in the bag. The right vertex is dominated by a previously processed vertex in the set and outside the bag. The bottom vertex is not yet dominated.
    }
    \label{fig:bag-and-state}
\end{figure}
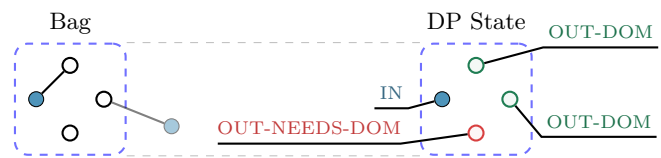

\subsection{A less innocent problem}
\label{subsec:a-less-innocent}

We can now begin to highlight the intricacies through a ``simple'' representative problem.

\begin{tcolorbox}[
    arc=3mm,      
    colback=white,
    colframe=black,
    boxrule=1pt   
]
\vspace{0.1cm}\noindent\textbf{\underline{$\mathtt{Connected\ with\ at\ least\ k\ vertices}$}}\vspace{0.2cm}

\textbf{Objective:} Compute the sum of all weights of sets $S \subseteq V$ such that: \vspace{0.1cm}

\quad\quad\quad\quad\quad The graph $G[S]$, induced over $S$, is connected and contains at least \( k = 4 \) vertices.

\end{tcolorbox}

On the face of it, the state one must store within the bag may appear relatively straightforward --- 
a counter for the size of the emerging set, and a running connectivity profile, i.e., a partitioning of the
vertices currently within the bag, indicating connectivity
with respect to the graph processed thus far.

% DP states figure
% LHS graph: 5 vertices (a–e) 
\newcommand{\ConnAtLeastKBag}{ %
    \begin{tikzpicture}[scale=0.9, baseline=(bag_node_conn_bag_view.center), remember picture]

        % Vertices ------------------------------
        \node[filled_vertex]    (a) at (0.000, 0.800) {};
        \node[filled_vertex] (b) at (0.761, 0.247) {};
        \node[filled_vertex] (c) at (0.470, -0.647) {};
        \node[filled_vertex] (d) at (-0.470, -0.647) {};
        \node[unfilled_vertex] (e) at (-0.761, 0.247) {};
        \node[filled_vertex, opacity=0.5]   (f) at ( 1.3, -0.4) {};  % f (filled, outside)
        
        % Bag (encloses a–e) ----------------------------------------
        \begin{pgfonlayer}{background}
            \node[bag, fit=(a) (b) (c) (d) (e)] (bag_node_conn_bag_view) {};
        \end{pgfonlayer}

        % Edges --------------------------------
        \draw[thick] (a) -- (b);
        \draw[thick] (d) -- (e);
        \draw[thick, opacity=0.5] (b) -- (f);
        \draw[thick, opacity=0.5] (c) -- (f);

        % Label inside the bag
        \node[anchor=north, font=\small] at (bag_node_conn_bag_view.north)
            [yshift=15pt] {Bag};
    \end{tikzpicture}%
}

% Connected-Size-At-Least-K State
\newcommand{\ConnAtLeastKState}{%
  \begin{tikzpicture}[scale=0.9, baseline=(bag_node_conn_dp_state.center), remember picture]

    % Vertices ------------------------------
    \node[filled_vertex]    (a_state) at (0.000, 0.800) {};
    \node[filled_vertex] (b_state) at (0.761, 0.247) {};
    \node[filled_vertex] (c_state) at (0.470, -0.647) {};
    \node[filled_vertex] (d_state) at (-0.470, -0.647) {};
    \node[unfilled_vertex] (e_state) at (-0.761, 0.247) {};

    % --- Globals Table ---
    \node[draw, rounded corners=3pt, inner sep=0pt, anchor=east, yshift=8pt, font=\small,
          label={[font=\small]above:Bag globals}] (globals_table) at (5, -0.47) {
        \begin{tabular}{l|l}
             \textbf{\scriptsize Key} & \textbf{\scriptsize Value} \\ \hline
             \scriptsize Set Size & \scriptsize $\ge 4$ \\
             \scriptsize Detached & \scriptsize False \\
        \end{tabular}
    };
    
    % Bag (encloses a–e + table) ----------------------------------------
    \begin{pgfonlayer}{background}

        % Connectivity Partition.
        \node[fit=(d_state), fill=nicegreen!95!black, opacity=0.5, rounded corners, inner sep=2pt] {};
        % \node[enclosing_shape, color=nicegreen!90!black, fit=(d_state)] {};
        \node[enclosing_shape, color=lightred, opacity=0.7, fit=(a_state) (b_state) (c_state)] {};
        
        % --- The main bag outline ---
        \node[bag, fit=(a_state) (b_state) (c_state) (d_state) (e_state) (globals_table)] (bag_node_conn_dp_state) {};
    \end{pgfonlayer}
    
    % Label inside the bag
    \node[anchor=north, font=\small] at (bag_node_conn_dp_state.north)
        [yshift=15pt] {DP State};
        
  \end{tikzpicture}%
}

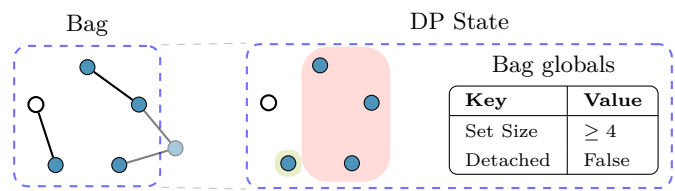
\begin{figure}[H]
    \centering

    % ---- bag style ----
    \tikzstyle{bag}=[draw=blue!80, thick, dashed,
                     rounded corners=5pt, inner sep=5pt, draw opacity=0.7]

    % Connectivity partition shape.
    \tikzstyle{enclosing_shape}=[fill=red, fill opacity=0.6, rounded corners=10pt, inner sep=4pt]

    \begin{tikzpicture}[remember picture, scale=0.8]
        % -------------------------------- TOP ROW --------------------------------
        \node (bag1)   at (0, 0)     {\ConnAtLeastKBag};
        \node (state1) at (6.0cm, 0) {\ConnAtLeastKState};

        \draw[overlay, gray, dashed, opacity=0.5] (bag_node_conn_bag_view.north east) -- (bag_node_conn_dp_state.north west);
        \draw[overlay, gray, dashed, opacity=0.5] (bag_node_conn_bag_view.south east) -- (bag_node_conn_dp_state.south west);

    \end{tikzpicture}

    \caption{A bag and its state. Vertices in the set \( S \) are filled blue. \emph{Semi-transparent} vertices have already been ``forgotten''. The connectivity partition has two classes (red, green) within the bag,
    in part owing to edges to vertices already forgotten, and the partial solution contains at least four vertices.
    Furthermore, there is no connected component of selected vertices, that has fully detached from the bag.
    }
    \label{fig:conn_at_least_k}
\end{figure}

So, how may one \textit{get the solution wrong}? Let us count the ways.\footnote{These listed errors were \textit{all} made by frontier reasoning models, including Google's Gemini 2.5, and OpenAI's o3.}
\paragraph{Premature invalidation of states.}
A natural mistake arises when prematurely discarding DP states. In this case, within the current bag, an emerging solution may appear to be disconnected --- but one must take into account the possibility that \textit{currently} disjoint components \textit{may merge} into a single connected component \textit{through yet-unseen vertices}, that will be processed at a later stage (i.e., introduced in future bags).
\paragraph{Failure to cap solution cardinality.}
Maintaining a minimal state profile requires that once the partial solution reaches the minimal size threshold, $k=4$, the DP
state no longer tracks \textit{exact} counts, but rather merely maintains that the threshold has been met. Failing to do so
results in a state-space explosion -- inflating the running time complexity from linear to cubic\footnote{The cubic runtime follows since the number of states per bag may now be linear in the order of the graph. Joining two such states naively requires quadratic time, and may happen $\Theta(n)$ times.}, due to redundantly tracking detailed counts.

\paragraph{Tracking external connected components.}
During the traversal of the tree decomposition, there may arise partial solutions that \textit{no longer intersect the current bag},
but nevertheless represent a valid connected subgraph. To handle these cases correctly, a single bit must track whether such an external connected
component exists, to be used to invalidate states in two situations: (a) If the intersection of the selected set and the bag is nonempty, we disqualify the state as this bag's vertices are forever disconnected from the external connected component. (b) During a join operation, if the two joined states both have an external connected component, joining them would mean joining two distinct connected components, resulting in a disconnected graph. This is a crucial detail that is easy to overlook.

\paragraph{Avoiding double-counting during joins.}
When merging states at a join node, one must carefully avoid double-counting vertices contained within the intersection (the bag itself). Naively summing the vertex counts from both states is incorrect; after all, vertices in the intersection must be counted exactly once. Additional subtle errors may occur when handling capped arithmetic, especially when transitioning between capped and uncapped counting schemes.

\paragraph{Careful merging of connectivity partitions.}
At the join nodes partial solutions from two distinct subtrees are merged. While it is essential that the subset of vertices selected for the emerging set within the intersection bag \textit{matches exactly} between merged states, their connectivity partitions \textit{might differ}. Correctly merging these partitions (by unifying any two sets that share a vertex between the two partitions) should be done efficiently,
for instance, by using a DSU data-structure.

\subsection{Complex Profiles}

Often, even thinking of a valid representation of a state is a challenging task. Here is one such case.

\begin{tcolorbox}[
    arc=3mm,      
    colback=white,
    colframe=black,
    boxrule=1pt   
]
\vspace{0.1cm}\noindent\textbf{\underline{$\mathtt{Cograph}$}}\vspace{0.2cm}

\textbf{Objective:} Compute the sum of all weights of sets $S \subseteq V$ such that: \vspace{0.1cm}

\quad\quad\quad\quad\quad The graph $G[S]$, induced over $S$, contains no induced path of length 3.\footnote{The length of a path is the number of edges it contains.}
\end{tcolorbox}

This problem illustrates a more geometric complexity in the state representation. We would like to preserve a state 
that guarantees that, within the emerging set, there is no induced path of length 3. What sort of state should we keep?
To adequately answer, we must first understand the different ways in which a path of length 3 may be \textit{perceived} through
a sequence of bags, as we traverse over a tree decomposition.

\begin{figure}[H]
\centering

\begin{subfigure}{0.25\textwidth}
    \centering
    \begin{tikzpicture}[scale=0.55]
        % Vertices
        \node[unfilled_vertex, opacity=0.5] (v1) at (0,1) {};
        \node[filled_vertex] (v2) at (2,0) {};
        \node[filled_vertex] (v3) at (4, 0) {}; 
        \node[unfilled_vertex, opacity=0.5] (v4) at (6, 1) {};
        
        % Bag
        \begin{pgfonlayer}{background}
            \node[bag, fit=(v2) (v3)] {};
        \end{pgfonlayer}
        
        % Edges
        \draw[thick, opacity=0.5] (v1) -- (v2);
        \draw[thick] (v2) -- (v3);
        \draw[thick, opacity=0.5] (v3) -- (v4);

    \end{tikzpicture}
\end{subfigure}
\begin{subfigure}{0.25\textwidth}
    \centering
    \begin{tikzpicture}[scale=0.55]
        % Vertices
        \node[filled_vertex] (f1) at (0,0) {};
        \node[filled_vertex] (f2) at (2,0) {};
        \node[unfilled_vertex, opacity=0.5] (f3) at (4, 1) {}; 
        \node[unfilled_vertex, opacity=0.5] (f4) at (6, 1) {};
        
        % Bag 
        \begin{pgfonlayer}{background}
            \node[bag, fit=(f1) (f2)] {};
        \end{pgfonlayer}
        
        % Edges
        \draw[thick] (f1) -- (f2);
        \draw[thick, opacity=0.5] (f2) -- (f3); 
        \draw[thick, opacity=0.5] (f3) -- (f4); 
    \end{tikzpicture}
\end{subfigure}
\begin{subfigure}{0.25\textwidth}
    \centering
    \begin{tikzpicture}[scale=0.55]
        % Vertices
        \node[filled_vertex] (u1) at (0,0) {};
        \node[unfilled_vertex, opacity=0.5] (u2) at (2,1) {};
        \node[unfilled_vertex, opacity=0.5] (u3) at (4, 1) {}; 
        \node[filled_vertex] (u4) at (6, 0) {};
        
        % Bag
        \begin{pgfonlayer}{background}
            \node[bag, fit=(u1) (u4)] {};
        \end{pgfonlayer}
        
        % Edges
        \draw[thick, opacity=0.5] (u1) -- (u2);
        \draw[thick, opacity=0.5] (u2) -- (u3); 
        \draw[thick, opacity=0.5] (u3) -- (u4); 
    \end{tikzpicture}
\end{subfigure}

\caption{A few different bags of cardinality $2$, encircled in blue, that are possible when traversing over $P_4$, a path with four vertices. Semi-transparent vertices had already been forgotten.}
\label{fig:p4_figure}
\end{figure}
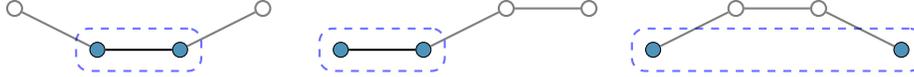

As it happens, one can indeed be convinced that no induced path of length 3 exists, by keeping track of the following:
\begin{enumerate}
    \item $\textcolor{nicered}{\textsc{out}}$: Indicating that the given vertex within the bag does not participate in the emerging set.
    \item $\textcolor{darkgreen}{\textsc{in-pair-and-have-common-forgotten-in-neigh}}$: For every \textit{pair} of vertices within the bag that are selected to participate in the emerging set ($\textsc{in}$), whether they share at least one common forgotten selected neighbor.
    \item $\textcolor{otherblue!90!black}{\textsc{in-and-forgotten-in-neigh}}$: For each selected vertex, record whether or not it has had a forgotten selected neighbor, so far.
    \item $\textcolor{tan!60!black}{\textsc{in-and-two-forgotten-in-neighs}}$: For each selected vertex, record whether or not there is a length-2 path leading up to it, consisting of two forgotten selected vertices.
\end{enumerate}

Preserving this state is no easier. At join nodes, complications emerge, as spurious induced paths of length three can arise
from combining subtrees. For instance, such paths can form when merging bags where a vertex from one subtree is an endpoint of a length-3 path, and simultaneously has a forgotten neighbor in the second subtree. Similarly, an induced path can emerge when there is a forgotten vertex connecting two vertices from one subtree, one of whom has an additional forgotten neighbor from the other subtree. And so forth \dots

\subsection{Mathematical Reasoning}

Sometimes, combining two problems can result in a problem exceeding (in elegance) the sum of its parts.
Consider the following example.

\begin{tcolorbox}[
    arc=3mm,      
    colback=white,
    colframe=black,
    boxrule=1pt   
]
\vspace{0.1cm}\noindent\textbf{\underline{$\mathtt{Bipartite\ Cograph}$}}\vspace{0.2cm}

\vspace{0.2cm}\textbf{Objective:} Compute the sum of weights of all subsets \( S \subseteq V \) such that the following holds:
\begin{enumerate}
    \item The induced subgraph \( G[S] \) contains no induced path of length 3.
    \item The induced subgraph \( G[S] \) is bipartite; that is, it contains no odd-length cycles.
\end{enumerate}
\end{tcolorbox}

This example highlights the value of mathematical knowledge. Mechanistically, one might combine separate solutions for tracking cographs\footnote{A cograph is a graph containing no induced path with four vertices and three edges.} and bipartite graphs, resulting in a large state and poor complexity. However, recognising the embedded combinational insight simplifies the solution significantly: a cograph is bipartite if and only if it is triangle-free.\footnote{This is illustrative of a more general theme; certain types of graph properties, for instance, those that are `induced-hereditary' such as the one shown in this problem, are characterised by a set of ``forbidden induced subgraphs'' (see \cite{Harary1969}). In this case, the forbidden graphs are $P_4$, a path with four vertices and three edges, and $C_3$, a triangle.}

\begin{lemma}
    \label{lem:bip_cograph}
    Let $G$ be a cograph. Then, $G$ is bipartite if and only if $G$ is triangle-free.
\end{lemma}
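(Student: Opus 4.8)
The plan is to prove the two directions separately, with the forward implication being immediate and the reverse being the substantive one. For the forward direction, observe that a bipartite graph contains no odd cycle, and in particular no copy of $C_3$; hence every bipartite graph --- cograph or not --- is triangle-free, and no use of the cograph hypothesis is needed.

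For the reverse direction, I would argue by contradiction: suppose $G$ is a cograph that is not bipartite, and exhibit a triangle. Since $G$ is not bipartite it contains an odd cycle; choose one of minimum length, say $C$ with vertices $v_1, v_2, \dots, v_{2k+1}$ in cyclic order, where $k \ge 1$. The first key step is to show that $C$ is necessarily chordless: a chord $v_i v_j$ would split $C$ into two smaller cycles, each of length at least $3$, whose lengths sum to $|C| + 2$; one of these is therefore odd and strictly shorter than $C$, contradicting minimality.

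The second key step is a case split on $k$. If $k = 1$, then $C = C_3$ is itself a triangle and we are done. If $k \ge 2$, then $|C| = 2k+1 \ge 5$, and the four consecutive vertices $v_1, v_2, v_3, v_4$ induce a $P_4$: the edges $v_1 v_2$, $v_2 v_3$, $v_3 v_4$ are present, the pairs $v_1 v_3$ and $v_2 v_4$ are non-edges because $C$ is chordless, and $v_1 v_4$ is a non-edge because $v_1$ and $v_4$ are non-consecutive on a cycle of length at least $5$. An induced $P_4$ in $G$ contradicts the assumption that $G$ is a cograph (cographs being, by definition, exactly the $P_4$-free graphs). Hence no such $G$ exists, so every triangle-free cograph is bipartite.

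The main obstacle --- indeed the only non-routine point --- is spotting the right structural fact to invoke, namely that a minimum-length odd cycle is automatically induced, which then forces either a triangle or an induced $P_4$. An alternative route avoids cycles and inducts on the cotree instead: a cograph is a single vertex, a disjoint union $G_1 \cup G_2$, or a join $G_1 + G_2$; the union case follows directly by induction, and in the join case triangle-freeness forces one side to be edgeless (otherwise an edge on one side together with any vertex on the other side is a triangle), so $G$ reduces to a single factor (induction) or is a complete bipartite graph. I would present the cycle-based argument as the main proof, since it reuses the forbidden-induced-subgraph perspective emphasised just above the lemma, and relegate the cotree induction to a remark.
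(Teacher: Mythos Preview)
Your proof is correct and follows essentially the same route as the paper: take a minimum-length odd cycle, look at four consecutive vertices, and force either a triangle or an induced $P_4$. The only cosmetic difference is that you establish chordlessness of the minimal odd cycle in one stroke (via the standard ``a chord yields a shorter odd cycle'' argument), whereas the paper rules out the potential chords $v_1v_3$, $v_2v_4$ using the triangle-free hypothesis and then handles the $v_1v_4$ case separately.
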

\begin{proof}
    Recall that a graph is bipartite if and only if it does not contain any odd-length cycle.
    Therefore, the first implication is immediate.
    In the opposite direction, let $G$ be a triangle-free cograph. Assume FSOC that $G$
    is not bipartite. Therefore, it contains (as a subgraph) an odd length cycle of length greater than three.
    Let $v_1 \sim v_2 \sim \dots \sim v_{2k + 1} \sim v_1$ be such an odd cycle, of minimum length, where $k > 1$.
    
    Consider the vertices $T = \{v_1, \dots, v_4\}$.
    Recall that $G$ is triangle-free, so except for the `path' joining those vertices,
    no two vertices may be joined by an edge, except perhaps for $v_1$ and $v_4$.
    If $v_1 \not\sim v_4$, then inducing over $T$ yields a copy of $P_4$, a contradiction.
    Otherwise, if $v_1 \sim v_4$, then the original cycle is split into a cycle of length four, and a cycle
    of length $2k - 1$, a contradiction.
\end{proof}

Another useful fact is that in a tree decomposition, every triangle (in fact, every clique) appears fully
within a bag (at some point). Thus, by relying on \Cref{lem:bip_cograph} one may greatly simplify the problem,
reducing the \textit{bipartiteness} to simply ensuring no bag introduces a triangle.
Indeed, the solution requires minimal additional overhead beyond that of the original $\mathtt{Cograph}$ problem.
\section{Evaluating Solutions}
\label{sec:evaluation}

A crucial aspect of our work is the ability to rigorously evaluate candidate solutions,
to any of our given algorithmic problems. To this end, we have developed a comprehensive
framework, allowing for both systematic generation of dynamic programming problems, and
verification of proposed solutions to them.

\subsection{Evaluation Environment}
\label{subsec:evaluation-extraction}

To focus the evaluation on the core algorithmic reasoning challenge, and to simplify the task presented to the
models to a large extent, we provide a purpose-built Python \href{https://github.com/double-ai/formulaone-dataset-release}{evaluation environment}.
This framework handles the input parsing, graph representation,
and the traversal of the tree decomposition. Consequently, the model's task is significantly streamlined: it need only
implement the \textit{logic} for the dynamic programming updates at each type of node in the tree decomposition.
Concretely, only the following five specific \textit{callback} functions must be implemented,

\begin{center}
\renewcommand{\arraystretch}{1.1}
\tikz\node[
    draw=black,
    thick,
    rounded corners=8pt,
    fill=lblue!80!white,
    inner xsep=1pt,
] (table) {
\setlength{\tabcolsep}{3pt} 
\newcolumntype{L}{>{\color{black}\ttfamily}l}
\newcolumntype{R}{>{\color{black}}l}
\begin{tabular}{@{ }L R@{ }}
    \multicolumn{1}{l}{\color{black}\textbf{\small Callback}} & \multicolumn{1}{l}{\color{black}\textbf{\small Description}} \\
    \cline{1-2}
    \rowcolors{1}{lightgray}{lightgray}
    leaf\_callback & Initialises the DP table at a leaf node of the tree decomposition (Base case). \\
    \rowcolor{colorBase}introduce\_callback & Handles the introduction of a vertex into a bag. \\
    forget\_callback & Handles the removal of a vertex from a bag. \\
    \rowcolor{colorBase}join\_callback & Handles the case in which two previously processed subgraphs meet at a bag. \\
    extract\_solution & Computes the final answer from the DP table at the root bag. \\
\end{tabular}
};
\end{center} 

The environment traverses through the tree decomposition in post-order, invoking the appropriate callback at each node.
To enforce that solutions are based purely on the principles of tree decomposition, the graph provided to each callback
is restricted to the subgraph induced by the vertices in the current bag.

\subsection{Problem Generation and Structure}

Our dynamic programming problems are generated by means of a domain-specific language (DSL).
For each such problem, the DSL produces:
\begin{enumerate}
    \item A human-readable description of the task (see \Cref{sec:dataset}).
    \item A \textit{verifier} $V$, mapping any input $(G, \mathcal{T}, w)$ to the integer answer to the problem at hand.
    \item A set of `tests', where each test consists of an \textit{input} -- which is a triplet $(G, \mathcal{T}, w)$ of a graph,
    a tree-decomposition, and a weight~function -- and optionally an \textit{output}, which is a single integer $x$.
\end{enumerate}

To ensure that the generated tests are tractable, our framework generates a complexity estimate for each task.
Every formula in our dataset is annotated with a specification for a valid (though not necessarily optimal) dynamic programming state.
This specification is constructed from a library of pre-defined, modular building blocks, which represent common patterns arising
in MSO-based dynamic programming, like state bits for tracking properties, or profiles for representing incidence (or higher-order) structures.

Our DSL leverages these state specifications to derive a concrete complexity bound for the problem,
which in turn informs the test generation. This model can also account for more advanced details,
such as rules for optimising computationally expensive operations like the `join' step (often reducing its
complexity from quadratic to near linear). We remark that the time limits for our tests are intentionally very generous;
we provide a performance overhead of up to $100\times$ over the expected runtime of the annotated solution,
ensuring that the evaluation prioritises algorithmic correctness over minor implementation-level inefficiencies.

\subsection{Generation of Tests}

To comprehensively test any proposed solution, it is crucial that in the generated tests, 
both the \textit{graph} and its \textit{tree-decomposition} be rich and interesting.
To this end, we employ a stochastic process; we sample the graph and a tree decomposition 
from a family of Markov chains, designed to explore the space of low-treewidth graphs.
This methodology provides us with exact control over the treewidth of the resulting graphs,
and other topological properties of the tree decomposition, thereby allowing us to ensure that
the tests for the problems remain tractable within an allotted timeframe.
Crucially, we remark that our sampling process ensures that (with high probability) all possible ``gadgets''
(small subgraphs) are \textit{present} in the tested graphs, and moreover, that they are
\textit{observed in all possible ways} through the local view of the bags.

\subsection{Types of Tests}

Our evaluation methodology consists of several key types of test suites,
each designed to probe a different aspect of a solution's validity.

\paragraph{Consistency.} A fundamental property of the evaluated dynamic programming algorithms
is that the final result should be \textit{invariant} to the choice of tree decomposition. That is,
for a fixed graph $G$, and weight function $w$, the output should not depend on the structure of the
tree decomposition $\mathcal{T}$. 
The consistency tests ensure that this property holds, by fixing a large graph $G$ and an arbitrary
weight function $w$, and enumerating over many different tree decompositions
of the same ambient graph, through a set of `perturbations' that successively modify the structure of an existing tree decomposition.

\paragraph{Correctness.} For sufficiently small input graphs, it is feasible to compute the expected output by brute force.
This is done by leveraging the aforementioned \textit{verifiers}. These tests serve as a direct verification of the
DP algorithm's logic. Our framework generates many small graphs and varied tree decompositions, and the output of the
candidate solution is compared against the known correct answer.

\paragraph{Efficiency.} These tests are designed to push a solution to its computational limits. The goal is to detect implementations where the DP state, or its handling, is not truly fixed-parameter linear (FPL). By systematically increasing
the difficulty of the tests, either by producing larger graphs, or graphs with certain extremal properties, we can identify performance bottlenecks that indicate a flawed or suboptimal implementation.

\paragraph{Sporadic (Exotic) Tests.} These includes using small ``universal'' graphs that are known to contain a wide range of subgraphs and structures, providing a dense set of features for a single test case. 

\section{Results}
\label{sec:results}

In this section we present the performance of frontier reasoning models on our datasets.

\subsection{Experimental Setup}
\label{sec:experimental_setup}

For our evaluation, we generate solutions from each model using a detailed prompt that provides the model with all necessary context.
Our goal is to help the model to the largest extent possible. 
The prompt, which can be found in full \href{https://github.com/double-ai/formulaone-dataset-release}{here}, consists of the following components:

\begin{itemize}
    \item A brief description of the dynamic programming setting, outlining the general approach for designing a state and the transition functions.
    \item The specific problem description, followed by the request to implement the four transition functions, and the extraction of the final answer from the root table (see \Cref{subsec:evaluation-extraction}).
    \item A set of basic guidelines, which instruct the model to avoid trying to circumvent the framework, state the expected fixed-parameter linear (FPL) runtime complexity, and encourage it to attempt a solution even for difficult problems.
    \item A utility class to query the graph induced by the vertices within a bag.
    \item \textbf{Three diverse few-shot example solutions}, included to aid the model, selected from areas where models were observed to struggle.
\end{itemize}

To score a model's response, the implemented callback functions are extracted from the completion and are integrated into the evaluation environment described in \Cref{subsec:evaluation-extraction}. They are then run against our test suites to produce a final score for the solution (see \Cref{sec:evaluation}).

\subsection{Evaluation Results}

We evaluated the four leading reasoning models on our dataset: o3 high (OpenAI), o3-Pro high (OpenAI), Gemini 2.5 Pro (Google DeepMind), and Grok 4 Heavy (xAI). In this evaluation, we provided each model with the maximum number of reasoning tokens, the maximum number of output tokens, and used the highest level of ``reasoning'' available.\footnote{Grok 4 Heavy was also provided with internet access, whereas other models were not due to technical constraints.} The models' only task is to implement the five necessary functions outlined in \Cref{sec:evaluation}.
To ensure comparable inference costs, we sampled ten \textit{independent} completions \textit{per problem} ($@10$) for o3 and Gemini 2.5 Pro, and a single completion ($@1$) for o3-Pro and Grok 4 Heavy (the latter two models already aggregate multiple samples internally).
A model is considered to have succeeded in solving a problem $@k$, if \textit{any} of its $k$ attempts has been successful.

The success rate on the FormulaOne dataset is remarkably low, with Grok 4 Heavy solving none of the problems, and Gemini, o3 and o3-Pro each solving only one problem out of 120. This poor performance is particularly notable given the substantial assistance provided: models received a diverse few-shot prompt, and the framework handled all input parsing, graph representation, and tree decomposition construction and traversal on their behalf. This indicates that the gap between current model capabilities and the reasoning required for these problems is fundamental and extends beyond prompt engineering.
We refer the reader to \Cref{subsect:central_failures}, where we provide a brief analysis of the failure
modes observed in our evaluation of these frontier models. 

\begin{figure}[H]
    \centering
    \includegraphics[width=0.72\textwidth]{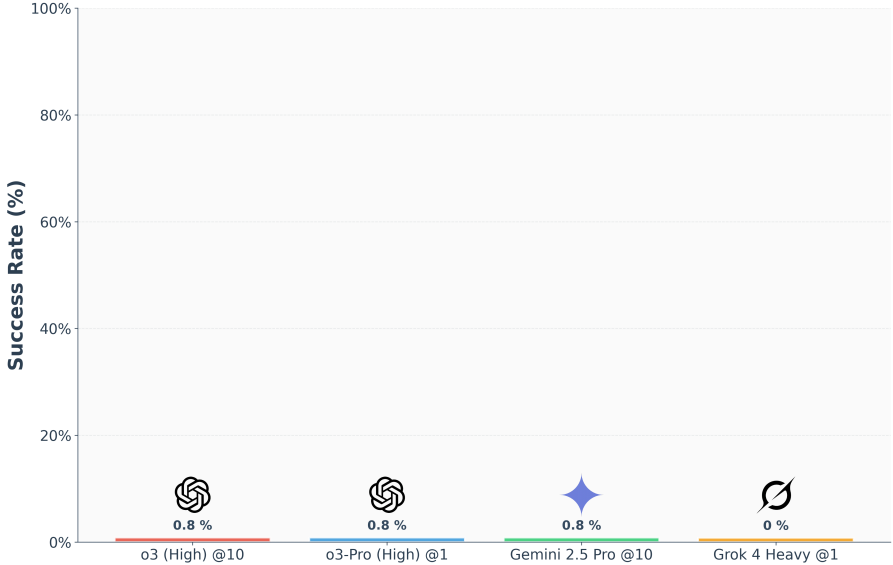}
    \caption{Performance of top frontier models on the FormulaOne dataset.}
    \label{fig:model_performance}
\end{figure}

The performance of reasoning models on FormulaOne-Warmup -- our dataset of \textit{entry-level} MSO-based problems -- was substantially better.
We believe these results clearly delineates the complexity spectrum within problems derived from MSO logic on graphs -- ranging from accessible entry-level reasoning tasks, to challenges that genuinely test the current upper bounds of algorithmic and logical reasoning capabilities in frontier models. We believe that FormulaOne-Warmup provides a valuable benchmark for the broader research community,
including those working with smaller or open-source models, and can serve as a training ground for developing more advanced algorithmic reasoning capabilities.

\begin{figure}[H]
    \centering
    \includegraphics[width=0.72\textwidth]{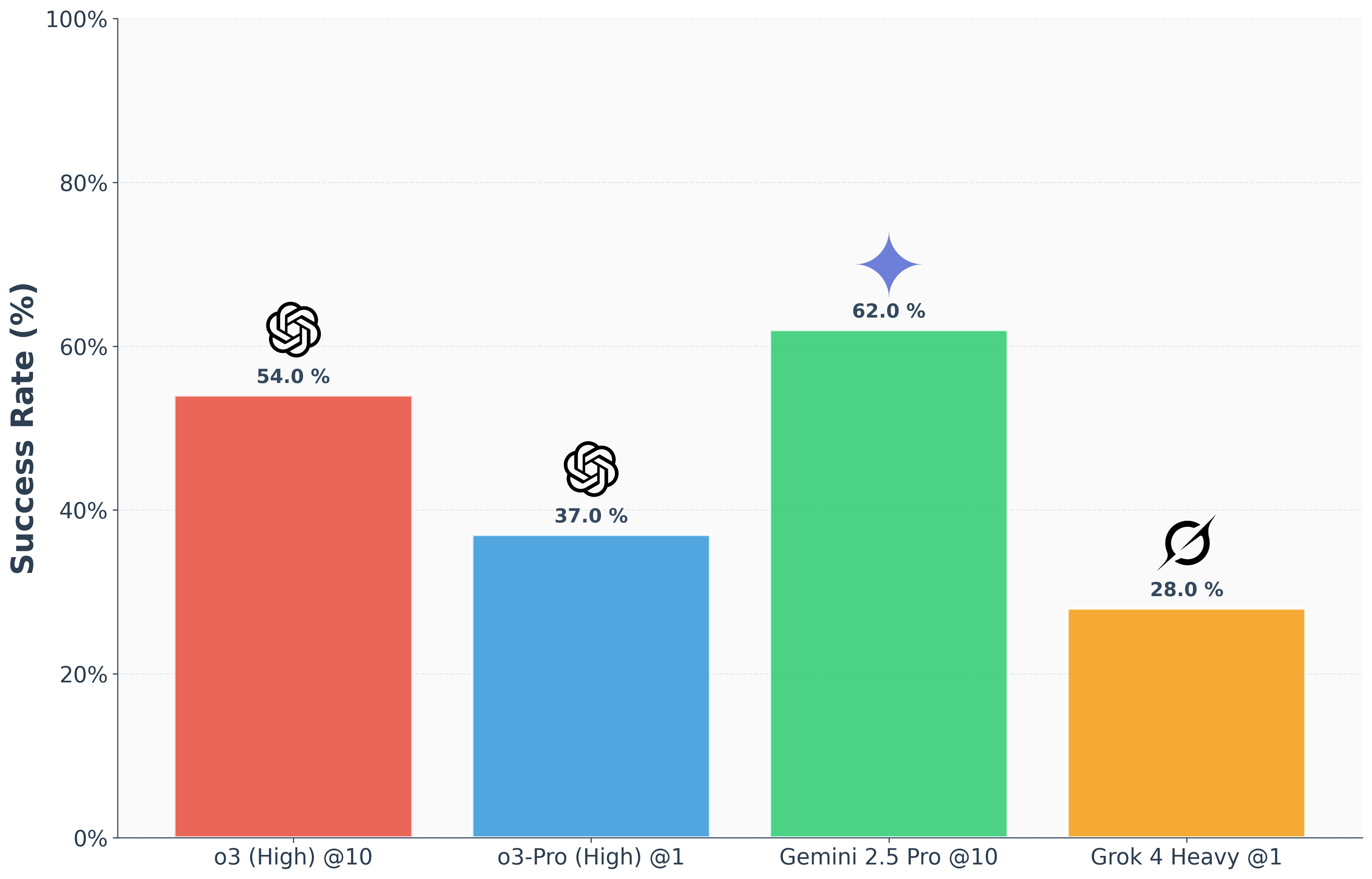}
    \caption{Performance of top frontier models on the FormulaOne-Warmup dataset. There are $100$ problems in the dataset. Performance is measured in terms of number of problems solved, where a problem is solved $@k$ if \textit{any} of the $k$ solutions is correct.}
    \label{fig:model_performance}
\end{figure}

\subsection{Further Analysis}

To better understand model performance and failure modes, each problem in our dataset is annotated according to specific algorithmic skills and state-design techniques required for its solution. Each problem can be assigned multiple labels (and indeed, harder problems often are). This categorisation allows for a fine-grained analysis of the strengths and weaknesses exhibited by the evaluated models.

\begin{center}
\begin{tcolorbox}[
    breakable, % Allows the box to break across pages
    colback=colorLight, % Background color
    colframe=black, % Frame color
    boxrule=1pt, % Frame thickness (like "thick")
    arc=8pt, % Rounded corners radius
    boxsep=0pt, % Ensures table fills the box without extra margins
    left=0pt, right=0pt, top=0pt, bottom=0pt, % Inner padding (like "inner sep")
]
\small
\renewcommand{\arraystretch}{1.4}
\begin{longtable}{@{} l @{} p{13.3cm} @{}}
    
    % --- Header for the FIRST page ---
    \multicolumn{1}{l}{\color{black}\textbf{\small Label}} &
    \multicolumn{1}{l}{\color{black}\textbf{\small Description}} \\
    \midrule
    \endfirsthead
    
    % --- Header for ALL SUBSEQUENT pages ---
    \multicolumn{2}{c}{\small\textit{Table \thetable, continued from previous page}} \\
    \midrule
    \multicolumn{1}{l}{\color{black}\textbf{\small Label}} &
    \multicolumn{1}{l}{\color{black}\textbf{\small Description}} \\
    \midrule
    \endhead
    
    % --- Footer for all pages EXCEPT the last one ---
    \midrule
    \multicolumn{2}{r}{\small\textit{Continued on next page}} \\
    \endfoot
    
    % --- Footer for the VERY LAST page ---
    \bottomrule
    \endlastfoot

    % --- Table Body ---
    $\mathtt{ADJACENCY}$ & Requiring neighborhood properties such as degrees, domination, and independence. \\
    
    \rowcolor{colorBase}
    $\mathtt{COMPOUND}$ & Formulas constructed recursively. \\
    
    $\mathtt{CONNECTIVITY\ \ }$ & Concerning connectivity, including components, cuts, minors, and cycle detection. \\

    \rowcolor{colorBase}
    $\mathtt{DISTANCE}$ & Relies on distance information in the ambient graph. \\
    
    $\mathtt{EPSILON}$ & ``Almost'' properties, where a property holds after omitting or adding a single vertex. \\
    
    \rowcolor{colorBase}
    $\mathtt{EXTREMAL}$ & Members of a property, that are also maximal or minimal w.r.t inclusion. \\
    
    $\mathtt{GADGETS}$ & Aiming to find fixed subgraphs, such as triangles or cliques, within the selected set. \\
    
    \rowcolor{colorBase}
    $\mathtt{GLOBAL}$ & Pertains to properties that require tracking global information across the entire graph. \\
    
    $\mathtt{GRAPH}$-$\mathtt{THEORY}$ & Requires non-trivial theorems of graph-theory. \\
    
    \rowcolor{colorBase}
    $\mathtt{INTERFACE}$ & Tracks how selected vertices inside a bag interact with those outside the bag. \\
    
    $\mathtt{LOGIC}$ & Having a complex logical structure, requiring analysis of quantifiers and negations. \\
    
    \rowcolor{colorBase}
    $\mathtt{LOOKAHEAD}$ & Must utilize states that encode information about future constraints or dependencies. \\
    
    $\mathtt{MODULAR}$ & Compares or analyzes the lattice structure of vertex neighborhoods. \\
    
    \rowcolor{colorBase}
    $\mathtt{TOPOLOGY}$ & Reasons about global topological structure. \\

\end{longtable}
\end{tcolorbox}
\captionof{table}{Annotations of problems in our dataset. Labels correspond to algorithmic techniques and elements of state-design arising in FormulaOne problems.}
\label{tab:problem_categories}
\end{center}

Using the categorisation provided in \Cref{tab:problem_categories}, we provide a more fine-grained breakdown of the models' performance, 
for each such category \textit{in isolation}.

% Tag breakdown figure
\begin{figure}[H]
\centering
\begin{tikzpicture}

% Data block with all necessary columns
\pgfplotstableread[col sep=comma]{
tag,gemini-pro,o3-batch,o3-pro,grok-4-heavy,total
GLOBAL,57.69,57.69,38.46,34.78,26
ADJACENCY,42.31,40.38,34.62,22.92,52
CONNECTIVITY,40.91,34.09,20.45,19.05,44
LOGIC,39.02,36.59,21.95,13.16,41
GADGETS,25.53,27.66,17.02,13.95,47
EXTREMAL,28.57,21.43,7.14,8.33,14
INTERFACE,27.27,13.64,4.55,14.29,22
EPSILON,25.00,0.00,0.00,25.00,4
GRAPH-THEORY,14.29,14.29,19.05,0.00,21
DISTANCE,9.52,4.76,4.76,4.76,21
TOPOLOGY,0.00,4.55,9.09,4.55,22
COMPOUND,12.82,5.13,0.00,0.00,39
LOOKAHEAD,0.00,2.17,4.35,0.00,46
MODULAR,0.00,0.00,0.00,0.00,10
}\mydata

\begin{axis}[
    ybar,
    bar width=5pt,
    width=\textwidth,
    height=8cm,
    enlarge x limits=0.05,
    ylabel={Success Rate (\%)},
    symbolic x coords={GLOBAL, ADJACENCY, CONNECTIVITY, LOGIC, GADGETS, EXTREMAL, INTERFACE, GRAPH-THEORY, EPSILON, COMPOUND, DISTANCE, TOPOLOGY, LOOKAHEAD, MODULAR},
    xtick=data,
    xticklabel={
    \pgfplotstablegetelem{\ticknum}{total}\of\mydata
    \strut\texttt{\tick}\ (\pgfplotsretval)
    },
    xticklabel style={rotate=45, anchor=east, font=\small},
    ymin=0,
    ymax=65,
    ymajorgrids=true,
    grid style=dashed,
    legend style={at={(0.5,-0.45)}, anchor=north, legend columns=-1},
    % ----- CODE FOR SINGLE-BLOCK LEGEND -----
    % This redefines the legend icon as a single filled rectangle
    legend image code/.code={
        \fill[#1] (0cm,-0.1cm) rectangle (0.2cm,0.1cm);
    },
    nodes near coords,
    nodes near coords style={font=\scriptsize\bfseries, rotate=90, anchor=west},
    nodes near coords align={vertical},
]

\addplot[draw=plotred, fill=plotred] table[x=tag, y=o3-batch]{\mydata};
\addplot[draw=plotblue, fill=plotblue] table[x=tag, y=o3-pro]{\mydata};
\addplot[draw=plotgreen, fill=plotgreen] table[x=tag, y=gemini-pro]{\mydata};
\addplot[draw=plotyellow, fill=plotyellow] table[x=tag, y=grok-4-heavy]{\mydata};

\legend{o3 (High) @10\ \ , o3-Pro (High) @1\ \ ,Gemini 2.5 Pro @10\ \ , Grok 4 Heavy @1}

\end{axis}
\end{tikzpicture}
\caption{Model Performance by problem label on the \emph{combined} FormulaOne and FormulaOne-Warmup datasets.
Problem counts with the given category label are shown next to each label in parentheses.
The success rate is the ratio of problems solved, having the given label.}
\label{fig:model_performance_per_tag}
\end{figure}

\subsection{Observed Central Failure Modes}
\label{subsect:central_failures}

Below, we provide a brief analysis of several recurrent failure modes observed during our evaluation,
highlighting specific areas where frontier reasoning models consistently struggle, despite extensive support.

\begin{enumerate}
\item{\textbf{Premature Finalisation.}}
Often models make irreversible decisions about a forgotten vertex based on the current,
non-final properties of its neighbors that remain in the bag. The failure is one of foresight;
as the model does not account for how these neighbors might change critically in the future.

\item{\textbf{Incomplete Geometric Reasoning.}}
A particular weak point of models is their inability to account for all the ways in which a small,
fixed subgraph ``pattern'' can be formed by combining vertices across different parts of the tree decomposition.
The states, or the transition logic, fails to cover all possible geometric configurations of the pattern's vertices with respect to the bags (see Figure \ref{fig:p4_figure} for an illustration of the kind of complications that may arise in such cases.)

\item{\textbf{Local-to-Global Errors.}}
This failure happens when the model successfully enforces local rules,
but fails to assemble them into the correct \textit{global} structure.
In other words, it builds an object that satisfies local constraints but violates the overall definition of the target structure.

\item{\textbf{Non-Canonical State Representation.}}
In counting problems, one must make sure to count every set exactly once. 
Often, the selected dynamic programming state includes auxiliary ``witness'' information,
that distinguishes between states that \textit{should} be considered identical,
due to a failure to define a \textit{canonical} representative for each state.
When unmitigated (for instance, by inclusion-exclusion), this leads to \textit{overcounting}.
    
\end{enumerate}
\section{Discussion}
\label{sec:discussion}    

While frontier AI models achieve high ratings in top human level competitive programming,
they fail on more challenging algorithmic challenges, such as the `hard' problems in our FormulaOne. This serves to highlight that current benchmarks,
which often rely on problems solvable by human experts, are insufficient for measuring the deep algorithmic and
combinatorial reasoning required for complex, real-world research tasks.  Given the depth of reasoning these problems demand,
future progress may depend on incorporating more principled approaches, such as systematic search, rather than relying solely
on the emergent capabilities of current models.

Another of our contributions is the harnessing of Monadic Second-Order Logic on graphs, in order to make the first steps
towards a principled method of creating a virtually unlimited suite of hard yet solvable problems. As such, we believe our dataset,
and its extensions, can serve as a high-depth environment for Reinforcement Learning with Verifiable Rewards (RLVR), moving beyond existing static datasets. 
We remark that the current dataset, while substantial, represents only a fraction of the challenges that can be generated (see \Cref{subsect:tip_of_the_iceberg}). 

Lastly, our work has a deep connection to theoretical computer science. The difficulty of many problems in our dataset
is linked to the Strong Exponential Time Hypothesis (SETH), a central conjecture in fine-grained complexity. This implies that
any significant algorithmic progress on these problems, whether by human or (hopefully) by machine, could have tangible theoretical implications, pushing at the frontiers of complexity theory.

\appendix
\section{Algorithm for Maximal Cluster Graph}
\label{sec:appendix_mcg}

Some problems are incredibly hard to get right from start to finish.
To illustrate why this is the case, let us consider one such problem, in full.

\begin{tcolorbox}[
    arc=3mm,      
    colback=white,
    colframe=black,
    boxrule=1pt   
]
\vspace{0.1cm}\noindent\textbf{\underline{$\mathtt{Maximal\ Cluster\ Graph}$}}\vspace{0.2cm}

\textbf{Objective:} Compute the sum of all weights of sets $S \subseteq V$ such that: \vspace{0.1cm}

\quad\quad\quad\quad\quad The graph $G[S]$ is a cluster graph\footnote{A cluster graph is a disjoint union of cliques.} that is maximal with respect to inclusion.
\end{tcolorbox}

The problem asks us to sum the weights of the sets $S$ for which $G[S]$ is a cluster graph,
\textit{and} there exists no superset $T \supseteq S$ such that $G[T]$ is a cluster graph.
We start with the following observation:
since the cluster graph property is downwards closed (with respect to inclusion),
maximality reduces to \textit{local maximality} -- that is, 
for any vertex $v \in V \setminus S$, the induced subgraph $G[S \cup \{v\}]$ is \textit{not} a cluster graph.

\subsection{Characterization}

A key observation is that a graph is a cluster graph if and only if it does not contain an induced path of length two (i.e., a copy of $P_3$). 
By definition a $P_3$ is a set of three vertices $\{u, v, w\}$ where the only edges are $u \sim v \sim w$.
Therefore, the problem is equivalent to finding a maximal vertex set $S \subseteq V$ such that the induced subgraph $G[S]$ is $P_3$-free. The maximality condition means that for any vertex $u \in V \setminus S$, adding $u$ to $S$ must create at least one induced $P_3$ in $G[S \cup \{u\}]$.

\subsection{Dynamic Programming Formulation}

As always, we solve the $\mathtt{Maximal\ Cluster\ Graph}$ problem through dynamic programming on a tree decomposition of the input graph $G$.
For each vertex $t$ in the tree decomposition with bag $X_t$, we compute a table $A_t$, which stores information about valid partial solutions
in the subgraph induced by the vertices processed so far, $V_t$.
A state in $A_t$ is defined by a \textit{profile} $\lambda$ for the vertices in the bag $X_t$. The table entry $A_t(\lambda)$ is a boolean flag,
indicating whether this profile can be extended from a valid partial solution on $V_t$.

\vspace{0.1cm}The profile $\lambda$ for $X_t$ consists of the following information for each vertex $v \in X_t$:

\begin{itemize}
    \item \textbf{Status $\mathtt{s}(v) \in \{\mathtt{selected}, \mathtt{not}\text{-}\mathtt{selected}\}$}: This indicates whether $v$ is part of the solution set $S$.
    \item \textbf{For each \textit{selected} vertex $v$}:
    \begin{itemize}
        \item \textbf{Forgotten Neighbour bit $\mathtt{FN}(v) \in \{0, 1\}$}, where $\mathtt{FN}(v)=1$ if and only if $v$ is adjacent to a \textit{selected} vertex in $S \cap (V_t \setminus X_t)$.
        \item \textbf{Obligation bit $\mathtt{O}(v) \in \{0, 1\}$}, where $\mathtt{O}(v)=1$ if and only if $v$ is required to become adjacent to a new \textit{selected} vertex in $S \setminus V_t$, to satisfy a maximality condition. 
    \end{itemize}

    \item \textbf{For each \textit{non-selected} vertex $u$}:
    \begin{itemize}
        \item \textbf{Safety bit $\mathtt{SAFE}(u) \in \{0, 1\}$}, where $\mathtt{SAFE}(u)=1$ if and only if the maximality condition for $u$
        is already satisfied (i.e., adding $u$ to $S \cap V_t$ would create an induced $P_3$).
    \end{itemize}
\end{itemize}

\subsection{DP Transitions}
The DP table for each node $t$ is computed based on the tables of its children.
For ease of notation, from here on, let $S_t = \{v \in X_t \mid s(v)=\mathtt{selected}\}$ and $U_t = X_t \setminus S_t$.

\subsubsection{Leaf Node}
For a tree decomposition, a leaf node $t$ has a bag $X_t$ containing a single vertex, say $X_t = \{v\}$.
We initialise the DP table for $t$ with two valid base profiles:
\begin{itemize}
    \item \textbf{Profile 1: $v$ is selected}. We set $\mathtt{s}(v)=\mathtt{selected}$, $\mathtt{FN}(v)=0$, and $\mathtt{O}(v)=0$.
    \item \textbf{Profile 2: $v$ is not selected}. We set $\mathtt{s}(v)=\mathtt{not}\text{-}\mathtt{selected}$ and $\mathtt{SAFE}(v)=0$.
\end{itemize}

\subsubsection{Introduce Node}
Let $t$ be an introduce node with child $t'$, such that $X_t = X_{t'} \cup \{z\}$.
For each valid profile $\lambda'$ on $X_{t'}$, we generate profiles $\lambda$ on $X_t$ by deciding the status of the new vertex $z$.
\begin{description}
    \item[Case 1: $s(z) = \mathtt{not}\text{-}\mathtt{selected}$.] The profile for vertices in $X_{t'}$ remains unchanged.
    We check if $z$ would \textit{seal} a $P_3$ (either fully within the bag, or if it is adjacent to a selected vertex whose
    $\mathtt{FN}$ bit is on).
    If so, we fix $\mathtt{SAFE}(z) = 1$, and otherwise fix $\mathtt{SAFE}(z) = 0$.

    \item[Case 2: $s(z) = \mathtt{selected}$.] \ \\
    \vspace{-0.4cm}\begin{enumerate}
        \item \textbf{$P_3$-freeness check}:
        The new profile is invalid if $z$ forms an induced $P_3$ with vertices already in the partial solution.
        This occurs whenever:
        \begin{itemize}
            \item $z$ is part of a $P_3$ contained entirely in the induced subgraph (either $z \sim u \sim v$ or $v \sim z \sim u$).
            \item $z$ is adjacent to a vertex $v \in S_{t'}$ that has a forgotten selected neighbour ($\mathtt{FN}(v)=1$).
            This would induce a $P_3$ where the non-edge $z \not\sim x$ is guaranteed by $x$ being forgotten.
        \end{itemize}
        \item \textbf{State update}: If all the checks pass, we create a new valid state $\lambda$.
        We initialise $\mathtt{FN}(z)=0$ and $\mathtt{O}(z)=0$.
        For any $v \in S_{t'}$ adjacent to $z$, its obligation $\mathtt{O}(v)$ is now fulfilled and is set to 0.
        For any $u \in U_{t'}$, we update its safety bit $\mathtt{SAFE}(u)$ to 1 if $z$ and some other vertices
        in $S_t$ form an induced $P_3$ with $u$.
    \end{enumerate}
\end{description}

\subsubsection{Forget Node}
Let $t$ be a forget node with child $t'$, such that $X_t = X_{t'} \setminus \{z\}$.
We project the profiles from $X_{t'}$ down to $X_t$.
\begin{description}
    \item[Case 1: $z \in S_{t'}$.] \ \\
    \vspace{-0.4cm}\begin{enumerate}
        \item \textbf{Obligation check}: The state is invalid if $\mathtt{O}(z)=1$.
        A vertex with an outstanding obligation cannot be forgotten.
        \item \textbf{State update}: If the check passes, we form the new profile on $X_t$.
        For each neighbour $v \in S_t$ of the forgotten vertex $z$, we update its forgotten neighbour bit: $\mathtt{FN}(v) \leftarrow \mathtt{FN}(v) \lor 1$.
    \end{enumerate}

    \item[Case 2: $z \in U_{t'}$.] \ \\
    \vspace{-0.4cm}\begin{enumerate}
        \item If $\mathtt{SAFE}(z)=0$, the omission of $z$ is as of yet unjustified. We must ensure that $z$ would seal a $P_3$ in the future.
        \begin{itemize}
            \item 
        The profile is invalid if the vertex $z$ has no selected neighbours in the bag ($N(z) \cap S_{t'} = \emptyset$).
        Without a selected neighbour, $z$ can never be part of an induced $P_3$. 
        \item The set of selected neighbours of $z$ in the bag forms a clique (otherwise it would seal a $P_3$ and the safety bit would be on). 
        \item \textbf{State update}: We place an obligation on all of $z$'s selected neighbours in the bag: for each $v \in N(z) \cap S_{t'}$, we set $\mathtt{O}(v) \leftarrow \mathtt{O}(v) \lor 1$.
    \end{itemize}
    \item If $\mathtt{SAFE}(z) = 1$ then no action is needed; the profile is projected to parent node, omitting $z$, and all the other vertices have the same status.
    \end{enumerate}
\end{description}

\subsubsection{Join Node}
Let $t$ be a join node with children $t_1$ and $t_2$, where $X_t = X_{t_1} = X_{t_2}$. A profile $\lambda$ for $X_t$ is valid if it can be formed by merging compatible valid profiles $\lambda_1$ from $t_1$ and $\lambda_2$ from $t_2$.

\begin{enumerate}
    \item \textbf{$P_3$-freeness check}: The merge is invalid if for any vertex $v \in S_t$, it has a forgotten neighbour from both branches ($\mathtt{FN}_1(v)=1$ and $\mathtt{FN}_2(v)=1$). This would create an induced $P_3$ with $v$ as the center and endpoints in $V_{t_1} \setminus X_t$ and $V_{t_2} \setminus X_t$.

    \item \textbf{State merge}: If the check passes, the new profile $\lambda$ is created by combining $\lambda_1$ and $\lambda_2$:
    \begin{itemize}
        \item For $u \in U_t$, the safety bit is merged with a logical OR, $\mathtt{SAFE}(u) = \mathtt{SAFE}_1(u) \lor \mathtt{SAFE}_2(u)$.
        \item For $v \in S_t$, the forgotten neighbour bit is also merged with a logical OR, $\mathtt{FN}(v) = \mathtt{FN}_1(v) \lor \mathtt{FN}_2(v)$.
        \item For $v \in S_t$, the obligation bit $O(v)$ is handled carefully. An obligation from one branch is discharged if the vertex acquires a forgotten neighbour from the other branch. The new obligation is set according to the rule:
        $\mathtt{O}(v) = (\mathtt{O}_1(v) \land \neg \mathtt{FN}_2(v)) \lor (\mathtt{O}_2(v) \land \neg \mathtt{FN}_1(v))$.
    \end{itemize}
\end{enumerate}

\subsection{Final Answer}

At the root node a profile is valid if for every outside vertex the safety bit is on, and for every selected vertex the obligation bit is off.
At this point we can extract the final answer.

\appendix
\bibliographystyle{alpha}
\bibliography{main}

\end{document}